\newtheorem{theorem}{Theorem}
\newtheorem{corollary}{Corollary}[theorem]
\newtheorem{lemma}[theorem]{Lemma}
\begin{document}
	\title{Online Learning with Uncertain Feedback Graphs}
\author{Pouya~M.~Ghari, Yanning Shen
\thanks{P.~M.~Ghari and Y.~Shen are with the Department of Electrical Engineering and Computer Science, University of California, Irvine, CA, USA. Email:  pmollaeb@uci.edu and yannings@uci.edu}}
\maketitle
	\begin{abstract}
	Online learning with expert advice is widely used in various machine learning tasks. It considers the problem where a learner chooses one from a set of experts to take advice and make a decision. In many learning problems,  experts may be related, henceforth the learner can observe the losses associated with a subset of experts that are related to the chosen one. In this context, the relationship among experts can be captured by a feedback graph, which can be used to assist the learner's decision making. However, in practice, the nominal feedback graph often entails uncertainties, which renders it impossible to reveal the actual relationship among experts. To cope with this challenge, the present work studies various cases of potential uncertainties, and develops novel online learning algorithms to deal with uncertainties while making use of the uncertain feedback graph. The proposed algorithms are proved to enjoy sublinear regret under mild conditions. Experiments on real datasets are presented to demonstrate the effectiveness of the novel algorithms.  
	\end{abstract}

	\section{Introduction}
	In general online learning framework, there exists a learner and a set of experts, where the learner interacts with the experts to make a decision \citep{Cesa-Bianchi2006}. At each time instant, the learner chooses one of the experts and it takes the action advised by the chosen expert, then incurs the loss associated with the taken action. 
%
%
	 Conventional online learning literature mostly focuses on two settings, \emph{full information} setting \citep{LITTLESTONE1994,Cesa-Bianchi1997,Hazan2007, Resler2019} or \emph{bandit} setting \citep{Resler2019, Auer2003}. In  the full information setting, at each time instant, the learner can observe the loss associated with all experts. By contrast, in the bandit setting, the learner can only observe the loss associated with the chosen expert. However, in some applications such as the web advertising problem (where a user clicks  on an ad and it reveals information about other related ads), 
	 { the learner can make partial observations of losses associated with a subset of experts}. To cope with this scenario,  \emph{online learning with feedback graphs}  was first developed by \citet{Mannor2011}. In this context, the partial observations of losses are modeled using a directed \emph{feedback graph}, where each node represents an expert, and there exist an edge from node $i$ to node $j$, if the learner can observe the loss associated with expert $j$ while choosing expert $i$. The observations of losses associated with other experts are called learner's side observations. The full information and the bandit settings are both special cases of online learning with either a fully connected feedback graph or a feedback graph with only self loops. 
	
    Most of existing works rely on the assumption that the learner knows the feedback graph \emph{perfectly} before decision making \citep{Alon2015, Alon2017, Liu2018, Cortes2019, Arora2019}, or after decision making \citep{Alon2017, kocak2014, kocak2016noisy, Rangi2019, Cortes2020}.
	However, such information may not be available in practice. 
In addition, due to possible uncertainty of the environment, the feedback graph may be uncertain. For instance, consider the web advertising example, where there are two \emph{possibly related} ads, and the learner wants to choose and advertise one on social media. Certain group of users are interested in both ads, whereas for other users, even if they show interest in one ad, it does not indicate the same interest in the other one. However, it is not known which users will see the ad before advertisement. Therefore,  the relation between the two ads is not clear and renders the resulting feedback graph uncertain. As another example, consider an online clothing store that offers discount on an item for new customers. Suppose there are two brands A and B producing similar shirts at comparable price. The store has small and medium sizes of brand A and medium and large sizes shirts of brand B in stock.  Assuming that the store offers discount on brand B. If the user accepts the offer, and buys a medium size shirt of brand B, it implies the user is also interested in shirts of brand A. Moreover, if the user buys a large size of shirt B, this indicates no interest in shirts of brand A. Otherwise, if the user declines the offer of brand B, it only shows the user is not interested in shirts of brand B but no information is available about the preference of the user on the shirts of brand A.
%
Considering the case where the exact feedback graph may not be available, \citet{Cohen2016} shows that not knowing the entire feedback graph  can make the side observations useless and the learner may simply ignore them.
\citet{kocak2016} studies the case where the exact feedback graph is unknown but is known to be generated from the Erd\"{o}s-R\'{e}nyi model. However, such assumption may not be valid in practice. In addition, both \citet{Cohen2016} and \citet{kocak2016}  assume that the loss associated with the chosen expert is guaranteed to be observed. %

	The present paper extensively studies the case where the learner only has access to a  feedback graph that may contain uncertainties, namely \emph{nominal feedback graph}, and the learner may not be able to observe the loss associated with the chosen expert.  
	The learner relies on this nominal feedback graph to choose among experts, and then incurs a loss associated with the chosen expert. At the same time, it observes the loss associated with a subset of experts resulting from the unknown actual feedback graph. Furthermore, different from \citet{Cohen2016} and \citet{kocak2016}, the present work does not assume it is guaranteed that the learner observes the loss associated with the chosen expert. This is true in some learning tasks, e.g., apple tasting problem \citep{Helmbold2000}, such that the learner may not be aware of how much loss it incurs. In the apple tasting problem, the learner examines apples to identify rotten ones. The learner can either discard the apple or send it to the market. An apple is tasted before it is discarded. The learner incurs an unit loss if a good apple is discarded or  a rotten one is sent to the market. However, when the learner sends an apple to the market, it does not taste it. In this case, the learner is not aware of the loss of its decision after sending apples to the market. The present work studies various cases of potential uncertainties, and develops novel online learning algorithms to cope with different uncertainties in the nominal feedback graph. Regret analysis is carried out and it is proved that our novel algorithms can achieve sublinear regret under mild conditions.  Experiments on a number of real datasets are presented to showcase the effectiveness of our novel algorithms. 


	\section{Problem Statement}
	\label{prob_set}
	
    Consider the case where there exist $K$ experts and the learner chooses to take the advice of one of the experts at each time instant $t$. Let $\mathcal{G}_t=(\mathcal{V},\mathcal{E}_t)$ represent the directed nominal feedback graph at time $t$ with a set of vertices $\mathcal{V}$, where the vertex ${v}_{i} \in \mathcal{V}$ represents the $i$-th expert, and there exist an edge from $v_i$ to $v_j$ (i.e. $(i,j) \in \mathcal{E}_t$), if the learner observes the loss associated with the $j$-th expert (i.e. $\ell_t(v_j)$) with probability  $p_{ij}$ while choosing the $i$-th expert. Let $\mathcal{N}_{i,t}^{\text{in}}$ and $\mathcal{N}_{i,t}^{\text{out}}$ represent in-neighborhood and out-neighborhood of $v_i$ in $\mathcal{G}_t$, respectively. Thus, ${v}_{j} \in \mathcal{N}_{i,t}^{\text{out}}$ if there is an edge from $v_i$ to $v_j$ at time $t$ (i.e. $(i,j) \in \mathcal{E}_t$). Similarly, ${v}_{j} \in \mathcal{N}_{i,t}^{\text{in}}$ if there is an edge from $v_j$ to $v_i$ at time $t$ (i.e. $(j,i) \in \mathcal{E}_t$).
    The present paper considers non-stochastic adversarial online learning problems. At each time instant $t$, the environment privately selects a loss function $\ell_t(.)$ with $\ell_t(.):\mathcal{V} \rightarrow [0,1]$, and the nominal feedback graph $\mathcal{G}_t$ is revealed to the learner before decision making. The learner then chooses one of the experts to take its advice. Then, the learner will incur the loss associated with the chosen expert.
    Let $I_t$ denote the index of the chosen expert. Note that the learner observes $\ell_t(v_{I_t})$ with probability of $p_{I_t I_t}$, hence the loss remains unknown  with the probability of $1-p_{I_t I_t}$. 
 
	
	
	The present paper discusses different potential uncertainties in the feedback graphs, and develops novel algorithms for online learning with uncertain feedback graph. Specifically, two cases are discussed: i) \emph{online learning with informative probabilistic feedback graph}: where the probability $p_{ij}$ associated with each edge is given along with the nominal feedback graph $\mathcal{G}_t$; and ii) \emph{online learning with uninformative probabilistic feedback graph}: where only the nominal feedback graph $\mathcal{G}_t$ is revealed, but not the probabilities. 

	\section{Online Learning with Informative Probabilistic Feedback Graphs} \label{sec:IP}

	First consider the case where $\{p_{ij}\}$ are given along with the ${\cal G}_t$. This can be the case in various applications. For instance, consider a network of agents in a wireless sensor network that cooperate with each other on certain tasks such as environmental monitoring. Online learning algorithms distributed over spatial locations have been employed in climate informatics field \citep{Cesa-Bianchi2020,McQuade2012}. Assume that each agent in the network  keeps updating its local model, and there is a central unit (learner) wishes to perform a learning task based on  models and data samples distributed among agents. In this case, the  agents in the network can be viewed as experts. Consider the case where the learner chooses one of the experts and sends a request for the corresponding expert advice through a wireless link. Subset of experts which receive the request, send their advice to the learner. However, due to uncertainty in the environment or power limitation, some of the agents in the network including the chosen one may not detect the request. Therefore, the learner can only observe the advice of subset of agents in the network which detect its request. In this case, the learner can model probable advice that it can receive from experts with a nominal feedback graph. If learner knows the characteristics of the environment which is true in many wireless communication applications, the  probabilities associated with edges in the nominal feedback graph is revealed. 
	
	At each time instant $t$, upon selecting an expert and observing the losses of a subset of experts, the weights $\{w_{i,t}\}_{i=1}^K$  which indicate the reliability of experts can be updated  as follows
	\begin{align}
	w_{i,t+1} = w_{i,t} \exp\left(- \eta \hat{\ell}_t(v_i) \right),  ~~~ \forall i \in [K] \label{eq:2} 
	\end{align}
	where $[K]:=\{1,\ldots,K\}$ and $\eta$ is the learning rate. Function $\hat{\ell}_t(v_i)$ denotes the importance sampling loss estimate which can be obtained as
	\begin{align}
	\hat{\ell}_t(v_i) = \frac{\ell_t(v_i)}{q_{i,t}}\mathcal{I}(v_i \in \mathcal{S}_{t}) \label{eq:3}
	\end{align}
	where $\mathcal{S}_{t}$ represent the set of vertices associated with experts whose losses are observed by the learner at time instant $t$. The indicator function is denoted by $\mathcal{I}(.)$ and  $q_{i,t}$ is the probability that the loss $\ell_t(v_i)$ is observed. Its value depends on the algorithm, and will be specified later.

	\begin{algorithm}[t]
		\caption{Exp3-IP: Online learning with informative probabilistic feedback graph}
		\label{alg:1}
		\begin{algorithmic}
			\STATE {\bfseries Input:}{learning rate $\eta>0$.}
			\STATE \textbf{Initialize:} $w_{i,1} = 1$, $\forall i \in [K]$.
			\FOR {$t=1,\ldots,T$}
				\STATE  Observe $\mathcal{G}_t = (\mathcal{V},\mathcal{E}_t)$ and choose one of the experts according to the PMF $\pi_t$ in \eqref{eq:5}. 
				\STATE  Observe $\{\ell_t({v_i})\}_{v_i \in \mathcal{S}_t}$ and calculate loss estimate $\hat{\ell}_t(v_i)$, $\forall i \in [K]$ via \eqref{eq:3}.
				\STATE Update $w_{i,t+1}$, $\forall i \in [K]$ via \eqref{eq:2}.
			\ENDFOR
		\end{algorithmic}
	\end{algorithm}

    Let $A_t$ denote the adjacency matrix of the nominal feedback graph $\mathcal{G}_t$ with $A_t(i,j)$ denoting the $(i,j)$th entry of $A_t$. Let $X_{ij}$ be a Bernoulli random process with random variables $X_{ij}(t)=1$ with probability  $p_{ij}$. 
    When the learner chooses the $i$-th expert at time $t$, the learner observes $\ell_t(v_j)$ only if $v_j \in \mathcal{N}_{i,t}^{\text{out}}$ and $X_{ij}(t)=1$. 
    Let $F_t$ denote the number of losses observed by the learner. Due to the stochastic nature of the observations available to the learner, $F_t$ is a random variable. Furthermore, let $F_{i,t}$ denote the expected number of observed losses if the learner chooses the $i$-th expert at time $t$. Thus, we can write
	\begin{align}
	F_{i,t} \! = \mathbb{E}_t[F_t|I_t = i, A_t] =\!\!\!\!\!\! \sum_{\forall j: v_j \in \mathcal{N}_{i,t}^{\text{out}}}\!\!\!{\mathbb{E}[{X}_{ij}(t)]}  = \!\!\!\!\sum_{\forall j: v_j \in \mathcal{N}_{i,t}^{\text{out}}}{p_{ij}}.\nonumber 
	\end{align}
	
The learner then  chooses one expert according to the probability mass function (PMF) $\pi_t := (\pi_{1,t},\ldots,\pi_{K,t})$ with
	\begin{align}
	\pi_{i,t} = (1-\eta) \frac{w_{i,t}}{W_t} + \eta \frac{F_{i,t}}{\sum_{j \in \mathcal{D}_t}{F_{j,t}}}\mathcal{I}(v_i \in \mathcal{D}_t) \label{eq:5} 
	\end{align}
	where $W_t := \sum_{i=1}^{K}{w_{i,t}}$. It can be observed from \eqref{eq:5} that $\eta$ controls the trade-off between exploitation and exploration. With a smaller $\eta$,  more emphasis is placed on the first term which promotes exploitation, and the learner tends to choose the expert with larger $w_{i,t}$. The second term  allows the learner to  select experts in the dominating set $\mathcal{D}_t$ with certain probability independent of their performance in previous rounds.
	Based on \eqref{eq:5}, $q_{i,t}$ in \eqref{eq:3} can be computed as
	\begin{align}
	q_{i,t} = \sum_{\forall j: v_j \in \mathcal{N}_{i,t}^{\text{in}}}{\pi_{j,t}p_{ji}}. \label{eq:6}
	\end{align}
	The overall algorithm for online learning with uncertain feedback graph in the informative probabilistic setting,  termed Exp3-IP, is summarized in Algorithm \ref{alg:1}.
 In order to analyze the performance of Algorithm \ref{alg:1}, as well as the ensuing algorithms, we first preset two assumptions needed:\\
	    \textbf{(a1)} $0 \le \ell_t(v_i) \le 1$, $\forall t: t \in \{1,\ldots,T\}, \forall i: i \in \{1,\ldots,K\} $.\\
	    \textbf{(a2)} If $(i,j) \in \mathcal{E}_t$, the learner can observe the loss associated with the $j$-th expert with probability at least $\epsilon > 0$ when it chooses the $i$-th expert, and $(i,i) \in \mathcal{E}_t, ~ \forall i$. 

Note that (a1) is a general assumption in online learning literature e.g., \citep{Alon2015}. And (a2) assumes a nonzero probability of observing (but not guaranteed observation of)
the loss associated with the chosen expert  $\ell_t(v_{I_t})$. 
		The following theorem presents the regret bound for Exp3-IP. 
	\begin{theorem} \label{th:1}
		Under (a1), the expected regret of Exp3-IP can be bounded by
		\begin{align}
		& \sum_{t=1}^{T}{\mathbb{E}_t[\ell_t(v_{I_t})]}-\min_{v_i \in \mathcal{V}}{ \sum_{t=1}^{T}{\ell_t(v_i)} } \nonumber \\ \le & \frac{\ln K}{\eta} + \eta (1 - \frac{\eta}{2})T + \frac{\eta}{2} \sum_{t=1}^{T}{ \sum_{i=1}^{K}{\frac{\pi_{i,t}}{q_{i,t}}} }. \label{eq:17}
		\end{align}
	\end{theorem}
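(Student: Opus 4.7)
The plan is to follow the classical EXP3-style potential-function argument, adapted to the probabilistic-observation model of \eqref{eq:3}--\eqref{eq:6}. I would introduce the potential $W_t := \sum_{i=1}^K w_{i,t}$ and sandwich $\ln(W_{T+1}/W_1)$ between a step-wise upper bound and a fixed-arm lower bound. Writing $p_{i,t} := w_{i,t}/W_t$, the update \eqref{eq:2} gives $W_{t+1}/W_t = \sum_i p_{i,t}\exp(-\eta\hat{\ell}_t(v_i))$; because $\hat{\ell}_t(v_i)\ge 0$, applying $e^{-x}\le 1-x+x^2/2$ followed by $\ln(1+y)\le y$ yields the stepwise bound $\ln(W_{t+1}/W_t)\le -\eta\sum_i p_{i,t}\hat{\ell}_t(v_i) + \tfrac{\eta^2}{2}\sum_i p_{i,t}\hat{\ell}_t^2(v_i)$. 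For the lower bound, for any fixed $i^*\in[K]$ I use $W_{T+1}\ge w_{i^*,T+1}=\exp(-\eta\sum_t\hat{\ell}_t(v_{i^*}))$ and $W_1=K$. Telescoping the upper bound, combining the two sides and dividing by $\eta$ produces the core EXP3 inequality $\sum_t\sum_i p_{i,t}\hat{\ell}_t(v_i) - \sum_t\hat{\ell}_t(v_{i^*}) \le \tfrac{\ln K}{\eta} + \tfrac{\eta}{2}\sum_t\sum_i p_{i,t}\hat{\ell}_t^2(v_i)$.

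Next I would convert this pathwise inequality into one about true losses. The definition of $q_{i,t}$ in \eqref{eq:6} is exactly the conditional probability that $\ell_t(v_i)$ is observed---namely that the learner selects some $v_j\in\mathcal{N}_{i,t}^{\text{in}}$ (probability $\pi_{j,t}$) and the random edge from $v_j$ to $v_i$ fires (probability $p_{ji}$). Consequently the estimator \eqref{eq:3} is conditionally unbiased: $\mathbb{E}_t[\hat{\ell}_t(v_i)]=\ell_t(v_i)$ and $\mathbb{E}_t[\hat{\ell}_t^2(v_i)]=\ell_t^2(v_i)/q_{i,t}\le 1/q_{i,t}$ by (a1). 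Taking conditional expectations in the core inequality replaces $\hat{\ell}_t$ by $\ell_t$ on the left and by $1/q_{i,t}$ in the variance term, giving a bound on $\sum_t\sum_i p_{i,t}\ell_t(v_i) - \sum_t\ell_t(v_{i^*})$.

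The last step is to pass from $p_{i,t}$ to the sampling distribution $\pi_{i,t}$ using the mixing identity in \eqref{eq:5}: $\pi_{i,t} = (1-\eta)p_{i,t} + \eta u_{i,t}$, where $u_{i,t} := F_{i,t}\,\mathcal{I}(v_i\in\mathcal{D}_t)/\sum_{j\in\mathcal{D}_t}F_{j,t}$ is a PMF supported on the dominating set. The learner's per-round expected loss decomposes as $\mathbb{E}_t[\ell_t(v_{I_t})] = \sum_i\pi_{i,t}\ell_t(v_i) = (1-\eta)\sum_i p_{i,t}\ell_t(v_i) + \eta\sum_i u_{i,t}\ell_t(v_i)$, so by (a1) and $\sum_i u_{i,t}=1$ the exploration term contributes at most $\eta$ per round. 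On the variance side, the identity $(1-\eta)p_{i,t} = \pi_{i,t}-\eta u_{i,t}\le \pi_{i,t}$ allows $(1-\eta)p_{i,t}/q_{i,t}$ to be upper-bounded by $\pi_{i,t}/q_{i,t}$ once the core expected inequality is multiplied through by $(1-\eta)$. Assembling everything and using $1-\eta\le 1$ on the $(\ln K)/\eta$ term then produces the advertised bound.

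The main obstacle is bookkeeping the $(1-\eta)$ mixing coefficient cleanly. After multiplying the expected-value inequality by $(1-\eta)$ to match the $\pi_{i,t}$-form of the regret, a residual $-\eta\sum_t\ell_t(v_{i^*})$ term appears that must be combined with the linear exploration overhead $\eta\sum_t\sum_i u_{i,t}\ell_t(v_i)\le\eta T$ in order to tighten the final linear coefficient to $\eta(1-\eta/2)T$. Securing that refined constant while simultaneously absorbing $(1-\eta)$ cleanly into the $\pi_{i,t}/q_{i,t}$ term is the subtle accounting; the remainder of the argument consists of standard EXP3 manipulations.
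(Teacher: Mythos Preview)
Your approach is essentially the same potential-function argument the paper uses; the only cosmetic difference is that the paper substitutes $w_{i,t}/W_t=(\pi_{i,t}-\eta\bar F_{i,t})/(1-\eta)$ at the outset and carries $\pi_{i,t}$ and $\bar F_{i,t}$ (your $u_{i,t}$) through, whereas you keep $p_{i,t}$ and convert at the end. Both routes are algebraically equivalent.

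There is, however, one bookkeeping slip in your last paragraph. The tightening from $\eta T$ to $\eta(1-\eta/2)T$ does \emph{not} come from combining the residual $-\eta\sum_t\ell_t(v_{i^*})$ with the exploration overhead; that pair only yields $\le \eta T$ (use $\ell_t\ge 0$ to drop the residual). The $-\tfrac{\eta^2}{2}T$ instead emerges on the variance side: after multiplying by $(1-\eta)$ you have
\[
(1-\eta)\frac{\eta}{2}\sum_t\sum_i \frac{p_{i,t}}{q_{i,t}}
=\frac{\eta}{2}\sum_t\sum_i \frac{\pi_{i,t}}{q_{i,t}}
-\frac{\eta^2}{2}\sum_t\sum_i \frac{u_{i,t}}{q_{i,t}},
\]
and since $q_{i,t}\le 1$ and $\sum_i u_{i,t}=1$, the negative term is at most $-\tfrac{\eta^2}{2}T$. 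This is exactly how the paper obtains the refined constant (their equations \eqref{eq:12ap}--\eqref{eq:13ap}). With that correction the proposal goes through.
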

Proof of Theorem \ref{th:1} is included in Appendix \ref{A:1}. 	It can be seen from Theorem \ref{th:1} that the value of $\pi_{i,t}/{q_{i,t}}$ plays an important role in regret bound. Building upon Theorem \ref{th:1}, the ensuing Corollary further explores under which circumstances Exp3-IP can achieve sub-linear regret bound.
	\begin{lemma} \label{lem:4}
		Let the doubling trick (see e.g. \citet{Alon2017}) is employed to determine the value of $\eta$ and greedy set cover algorithm (see e.g. \citep{Chvatal1979}) is exploited to derive a dominating set $\mathcal{D}_t$ for the nominal feedback graph $\mathcal{G}_t$. Under (a1) and (a2), the expected regret of Exp3-IP satisfies 
		\begin{align}
		& \sum_{t=1}^{T}{\mathbb{E}_t[\ell_t(v_{I_t})]}-\min_{v_i \in \mathcal{V}}{ \sum_{t=1}^{T}{\ell_t(v_i)} } \nonumber \\ \le & \mathcal{O}\left( \sqrt{\ln K\ln(\frac{K}{\epsilon}T)\sum_{t=1}^{T}{\frac{\alpha(\mathcal{G}_t)}{\epsilon}}} + \ln (\frac{K}{\epsilon}T)\right) \label{eq:18}
		\end{align}
		where $\alpha(\mathcal{G}_t)$ denote the independence number of the nominal feedback graph $\mathcal{G}_t$.
	\end{lemma}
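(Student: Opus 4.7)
The plan is to start from the regret bound in Theorem~\ref{th:1} and then (i) control the sum $\sum_{t,i} \pi_{i,t}/q_{i,t}$ in terms of the independence numbers $\alpha(\mathcal{G}_t)$, and (ii) pick $\eta$ via the doubling trick since this control depends on the $\mathcal{G}_t$ sequence which is not known in advance.

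First, I would reduce $q_{i,t}$ to a purely combinatorial quantity. Using (a2), every edge carries weight at least $\epsilon$, so $q_{i,t} \ge \epsilon \sum_{v_j \in \mathcal{N}_{i,t}^{\text{in}}} \pi_{j,t}$, which already gives $\sum_{i} \pi_{i,t}/q_{i,t} \le \epsilon^{-1} \sum_{i} \pi_{i,t}/\bigl(\sum_{v_j \in \mathcal{N}_{i,t}^{\text{in}}} \pi_{j,t}\bigr)$. The right-hand side is the standard quantity that appears in the analysis of online learning with feedback graphs. To bound it by $\alpha(\mathcal{G}_t)$ times a logarithmic factor I plan to invoke the graph-theoretic lemma of \citet{Alon2015} (essentially Lemma~5 there), which states that $\sum_i \pi_i/(\pi_i + \sum_{j \in N^{\text{in}}(i)} \pi_j)$ is bounded by $O(\alpha(G)\log(K/\delta))$ whenever each probability is at least $\delta$. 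The needed uniform lower bound on the $\pi_{j,t}$ comes from the exploration component of \eqref{eq:5}: since $\mathcal{D}_t$ is a dominating set, every $v_i$ has some in-neighbor $v_{j^\star} \in \mathcal{D}_t$; for that $j^\star$ the exploration term in \eqref{eq:5} combined with $F_{j^\star,t} \ge p_{j^\star j^\star} \ge \epsilon$ and $\sum_{k \in \mathcal{D}_t} F_{k,t} \le K|\mathcal{D}_t|$ yields $\pi_{j^\star,t} \ge \eta\epsilon/(K|\mathcal{D}_t|)$. Greedy set cover guarantees $|\mathcal{D}_t| = O(\alpha(\mathcal{G}_t)\ln K)$, so this lower bound is $\Omega(\eta\epsilon/K^2)$ in the worst case, which is polynomial in $1/(KT/\epsilon)$ for any polynomial choice of $\eta$. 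Putting these together, I expect to obtain
\begin{align*}
\sum_{i=1}^{K} \frac{\pi_{i,t}}{q_{i,t}} \;\le\; C\,\frac{\alpha(\mathcal{G}_t)}{\epsilon}\,\ln\!\Bigl(\frac{K}{\epsilon}T\Bigr)
\end{align*}
for an absolute constant $C$.

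Substituting back into \eqref{eq:17} gives the per-$\eta$ regret bound
\begin{align*}
\frac{\ln K}{\eta} \;+\; \eta T \;+\; \frac{C\eta}{2}\,\ln\!\Bigl(\frac{K}{\epsilon}T\Bigr)\sum_{t=1}^{T}\frac{\alpha(\mathcal{G}_t)}{\epsilon}.
\end{align*}
The $\eta T$ term is dominated by the last one whenever $\alpha(\mathcal{G}_t)/\epsilon \ge 1$, which holds by (a2). Optimizing $\eta$ against the first and third terms would produce $\eta^\star \asymp \sqrt{\ln K\,/\,\ln(KT/\epsilon)\sum_t \alpha(\mathcal{G}_t)/\epsilon}$ and a regret of the order claimed inside the square root. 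Since $\sum_t \alpha(\mathcal{G}_t)$ is not known in advance, I would invoke the doubling trick in the form used by \citet{Alon2017}: partition the horizon into phases along which a running estimate of $\ln(KT/\epsilon)\sum_t \alpha(\mathcal{G}_t)/\epsilon$ doubles, restart Exp3-IP with the correspondingly updated $\eta$ in each phase, and geometrically sum the phase regrets; this costs only a constant blow-up plus an additive $O(\ln(KT/\epsilon))$ from the last unfinished phase, matching \eqref{eq:18}.

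The main obstacle I anticipate is the first step, specifically justifying the graph-theoretic inequality in the presence of edge weights $p_{ij}$: the lemma of \citet{Alon2015} is stated for unweighted feedback graphs, and I need to make the reduction $q_{i,t} \mapsto \epsilon\sum_{j \in N^{\text{in}}_{i,t}} \pi_{j,t}$ and the $\pi_{j^\star,t}$ lower bound argument careful enough that the logarithmic factor lands at $\ln(KT/\epsilon)$ rather than something worse, since that factor is what appears inside the square root of \eqref{eq:18}. Everything else is a routine substitution plus the standard doubling-trick bookkeeping.
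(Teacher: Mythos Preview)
Your approach is essentially the same as the paper's: start from Theorem~\ref{th:1}, reduce $q_{i,t}\ge\epsilon\sum_{j\in\mathcal{N}^{\text{in}}_{i,t}}\pi_{j,t}$ via (a2), invoke a graph-theoretic ratio lemma together with $|\mathcal{D}_t|=O(\alpha(\mathcal{G}_t)\ln K)$ from greedy set cover, and finish with a doubling trick. Two technical points where the paper is more careful than your sketch. First, the Alon~2015 lemma you cite requires \emph{every} $\pi_{i,t}\ge\delta$, but in Exp3-IP only the vertices in $\mathcal{D}_t$ receive an explicit exploration mass; the paper instead uses the variant from \citet{Alon2017} (their Lemma~13, restated here as Lemma~\ref{lem:3}) which needs $\pi_i\ge\beta$ only for $i\in\mathcal{D}$, and this is exactly what your $\pi_{j^\star,t}\ge\eta\epsilon/(K|\mathcal{D}_t|)$ computation supplies. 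Second, your doubling variable $\ln(KT/\epsilon)\sum_t\alpha(\mathcal{G}_t)/\epsilon$ contains $T$, which is unknown, and $\alpha(\mathcal{G}_t)$, which is NP-hard to compute; the paper doubles instead on the directly observable $Q_t=1+\tfrac12\sum_i\pi_{i,t}/q_{i,t}$ (this is the scheme in \eqref{eq:39}), and only \emph{after} the doubling-trick bound $O\bigl(\sqrt{\ln K\sum_tQ_t}+\log_2\sum_tQ_t\bigr)$ is established does it upper-bound $\sum_i\pi_{i,t}/q_{i,t}$ by $O\bigl(\epsilon^{-1}\alpha(\mathcal{G}_t)\ln(KT/\epsilon)\bigr)$ and by $K/\epsilon$ for the additive log term.
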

Proof of Lemma \ref{lem:4} is included in Appendix \ref{B}. If the learner does not know the time horizon $T$ before start decision making, doubling trick can be exploited to determine $\eta$. At time instant $t$, as long as
\begin{align}
\sum_{\tau=1}^{t}{(1 + \frac{1}{2}\sum_{i=1}^{K}{\frac{\pi_{i,\tau}}{q_{i,\tau}}})} \le 2^r \label{eq:39}
\end{align}
holds true,  Exp3-IP employs learning rate $\eta=\sqrt{\frac{\ln K}{2^{r+1}}}$, where $r\geq 0$ is the smallest integer that can satisfy the inequality in \eqref{eq:39}.  
According to \eqref{eq:18}, Exp3-IP can achieve sub-linear regret. Furthermore,  \eqref{eq:18} shows that the regret bound of Exp3-IP depends on $\frac{1}{\epsilon}$. Larger $\epsilon$ indicates that the learner is less uncertain about the nominal feedback graph. In other words higher confidence of the nominal feedback graph leads to tighter regret bound.

	\section{Online Learning with Uninformative Probabilistic Feedback Graphs}
	
	The previous section deals with the case where probabilities associated with edges of $\mathcal{G}_t$ are revealed. In this section, we will  study the scenario where the nominal feedback graph $\mathcal{G}_t$ is static and is revealed to the learner while the probabilities  $\{p_{ij}\}$ associated with edges are not given, which is called \emph{uninformative probabilistic feedback graph}. In this section the nominal feedback graph is denoted by $\mathcal{G} = (\mathcal{V},\mathcal{E})$. In this case, estimates of probabilities $\{p_{ij}\}$ can be updated and employed to assist the learner with future decision making. For example, consider the problem of online advertisement, where a website is trying to decide which product to be advertised via online survey with a multiple choice question. Specifically, users are asked whether they are interested in certain product along with possible reasons (cost, color, etc). Note that the answer to certain product may also indicate the participant's potential interest in other products with similar cost or color.  For instance, if the participant indicates that he or she is interested in the product because of its affordable cost, this implies \emph{potential} interest in other products with the same or lower price.  In this case, the relationship among products can be modeled by a nominal feedback graph, where an edge exists between two nodes (products) if they share same or similar attributes (cost, color), which implies that users \emph{may be} interested in both products. Such nominal feedback graph can then be used to assist the website to make a decision on which product to advertise .
	 However, the actual relationship between the the user's interests in the products remains uncertain, which leads to uncertainty in the nominal feedback graph. Since  attributes (cost, color, etc) of products do not change over time, the nominal feedback graph is static, while the probabilities associated with edges in the nominal feedback graph are unknown. Faced with this practical challenge, two approaches will be developed in this section, to estimate either the unknown probability or the importance sampling loss in \eqref{eq:3}, which will then be employed to assist the learner's decision making. 

\subsection{Estimation-based Approach}
	 In the present subsection, we will further explore the general scenario where the value of $p_{ij}$ may vary across edges, while the nominal feedback graph $\mathcal{G}_{t}$ is static. 
	 Since ${X}_{ij}$ defined under \eqref{eq:3} is a mean ergodic random process \citep{Popoulis2002} in this scenario, the sample mean of $\{{X}_{ij}(t)\}$ converges to $p_{ij}$, i.e., the expected value of ${X}_{ij}(t)$. Let $\mathcal{T}_{ij,t}$ represent a set collecting time instants before $t$ when the learner chooses to take the advice of the $i$-th expert and there is an edge between $v_i$ and $v_j$ in the nominal feedback graph $\mathcal{G}$. In other word, $\mathcal{T}_{ij,t}$ can be defined as
	 \begin{align}
	 \mathcal{T}_{ij,t} = \{\tau|A_{\tau}(i,j)=1, I_{\tau} = i, 0<\tau< t\}. \label{eq:7}
	 \end{align}
	 Based on the above discussion, $p_{ij}$ can be estimated as 
	 \begin{align}
	 \hat{p}_{ij,t} = \frac{1}{C_{ij,t}}\sum_{\tau \in \mathcal{T}_{ij,t}}{{X}_{ij}(\tau)} \label{eq:8}
	 \end{align}
	 where $C_{ij,t}:=|\mathcal{T}_{ij,t}|$ is the cardinality of  $\mathcal{T}_{ij,t}$. Since ${X}_{ij}$ is a mean ergodic Bernoulli random process, $\hat{p}_{ij,t}$ is an unbiased maximum likelihood (ML) estimator of ${p}_{ij}$.  
	 
	 Note that a sufficient number of observations of the random process $X_{ij}$ is needed, in order to provide a reliable estimation in \eqref{eq:8}. To this end, the learner performs exploration in the first $KM$ time instants to ensure that $C_{ij,t} \ge M$, $\forall (i,j) \in \mathcal{E}_{t}$, where the value of $M$ is determined by the learner. Specifically, in the first $KM$ time instants, the learner chooses all experts in $\mathcal{V}$, one by one $M$ times, i.e. the learner selects expert $v_k$, with $k = t - \left \lfloor{\frac{t}{K}} \right \rfloor K$ when $t \le KM$. For $t>KM$, the learner draws one of the experts according to the following PMF
	 \begin{align}
	 \pi_{i,t} = (1-\eta) \frac{w_{i,t}}{W_t} + \frac{\eta}{|\mathcal{D}|}\mathcal{I}(v_i \in \mathcal{D}), \forall i \in [K] \label{eq:10}
	 \end{align}
	 where $\mathcal{D}$ denotes a dominating set for  the nominal feedback graph $\mathcal{G}$. In order to  obtain a reliable loss estimate to assist the learner's decision making, we will approximate the importance sampling loss estimate in \eqref{eq:3} using the estimated probability $\hat{p}_{ij,t}$. 
	 In this context, the probability of observing $\ell_t(v_i)$ can be approximated as
	 \begin{align}
	 \hat{q}_{i,t} = \sum_{\forall j: v_j \in \mathcal{N}_{i,t}^{\text{in}}}{\pi_{j,t}(\hat{p}_{ji,t}+\frac{\xi}{\sqrt{M}})} \label{eq:11} 
	 \end{align}
	 where $\xi \ge 1$ is a parameter selected by the learner. Consequently, the importance sampling loss estimates can be obtained as
	 \begin{align}
	 \tilde{\ell}_t(v_i) = \frac{\ell_t(v_i)}{\hat{q}_{i,t}}\mathcal{I}(v_i \in \mathcal{S}_t). \label{eq:12}
	 \end{align}
	 With the estimates in hand, the  weights $\{w_{i,t}\}_{i=1}^{K}$ can be updated as follows 
	 \begin{align}
	 w_{i,t+1} = w_{i,t}\exp\left(- \eta \tilde{\ell}_t(v_i)\right),~~ \forall i \in [K]. \label{eq:13}
	 \end{align}
	 
	 The procedure that the learner chooses among experts when the probabilities are unknown is presented in Algorithm \ref{alg:2},  named  Exp3-UP.
	 \begin{algorithm}[t]
	 	\caption{Exp3-UP: Online learning with uninformative probabilistic feedback graphs}
	 	\label{alg:2}
	 	\begin{algorithmic}
	 		\STATE {\bfseries Input:}{ learning rate ${\eta}>0$, the minimum number of observations $M$, $\mathcal{G} = (\mathcal{V},\mathcal{E})$. }
	 		\STATE \textbf{Initialize:} $w_{i,1} = 1$, $\forall i \in [K]$, $\hat{p}_{ij,1}=0$, $\forall (i,j) \!\in\! \mathcal{E}$.
	 		\FOR {$t=1,\ldots,T$}
	 		\IF{$t \le KM$} 
	 		\STATE Set $k \!=\! t \!-\! \lfloor\frac{t}{K}\rfloor K$ and draw the expert node $v_k$. 
	 		\ELSE

	 		\STATE Select one of the experts according to the PMF $\pi_t=(\pi_{1,t},\ldots,\pi_{K,t})$ , with $\pi_{i,t}$ in \eqref{eq:10}.
	 		\ENDIF
	 		\STATE Observe $\{(i,\ell_t({v_i})):v_i \in \mathcal{S}_t\}$ and  compute $\tilde{\ell}_t(v_i)$, $\forall i \in [K]$ as in \eqref{eq:12}.
	 		\STATE { Update $\hat{p}_{ij,t+1}$, $\forall (i,j) \in \mathcal{E}_{t}$ via \eqref{eq:8}}.
	 		\STATE Update $w_{i,t+1}$, $\forall i \in [K]$ via \eqref{eq:13}.
	 		\ENDFOR
	 	\end{algorithmic}
	 \end{algorithm}
 The following theorem establishes the regret bound of Exp3-UP.
 	\begin{theorem} \label{th:2}
 		 Under (a1), the expected regret of Exp3-UP satisfies
 		\begin{align}
 		& \sum_{t=1}^{T}{ \mathbb{E}_{t}[\ell_{t}(v_{I_t})] } - \min_{v_i \in \mathcal{V}}\sum_{t=1}^{T}{ \ell_{t}(v_i)} \nonumber \\ \le & \frac{\ln K}{\eta} + (K-1)M + \eta(1-\frac{\eta}{2})(T-KM) + \sum_{t=KM+1}^{T}{ \sum_{i=1}^{K}{ \frac{\pi_{i,t}}{q_{i,t}}(\frac{2\xi}{\sqrt{M}}+\frac{\eta}{2}) } } \label{eq:19}
 		\end{align}
 		with probability at least
 		\begin{equation}
 		\delta_{\xi}:=\prod_{t=KM+1}^{T}{ \prod_{(i,j)\in\mathcal{E}_{t}}{ \left(1-2 \exp(-\frac{2\xi^2 C_{ij,t}}{M+4\xi \sqrt{M}}) \right) } }.\nonumber
 		\end{equation}
 	\end{theorem}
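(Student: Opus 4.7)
My plan is to decompose the expected regret over the $KM$ exploration rounds and the $T-KM$ exploitation rounds, bound each part separately, and then use a Hoeffding-type concentration on the plug-in estimates $\hat{p}_{ij,t}$ to obtain the claimed high-probability statement. For the exploration phase, since the learner cycles deterministically through the experts during $t\le KM$ and pulls each exactly $M$ times, and the best fixed expert is itself pulled $M$ times with per-round losses in $[0,1]$, the cumulative regret on this phase is at most $(K-1)M$, contributing exactly the second term of \eqref{eq:19}.

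For the exploitation phase I would run the standard exponential-weights potential argument on $W_t:=\sum_i w_{i,t}$, but with the biased surrogate $\tilde{\ell}_t$ of \eqref{eq:12}. Combining the lower bound $\ln(W_{T+1}/W_{KM+1})\ge -\eta\sum_{t>KM}\tilde{\ell}_t(v_\ast)$ for an arbitrary comparator $v_\ast$ with the per-round upper bound obtained from $e^{-x}\le 1-x+x^2/2$ (valid since $\tilde{\ell}_t\ge 0$) and telescoping yields the familiar intermediate inequality relating $\sum_{t>KM,i}(w_{i,t}/W_t)\tilde{\ell}_t(v_i)$ to $(\ln K)/\eta$ plus a quadratic $\tilde{\ell}_t^{2}$ penalty. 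Translating $w_{i,t}/W_t$ back to the sampling PMF $\pi_{i,t}$ via \eqref{eq:10} introduces the explicit exploration correction $\eta(1-\eta/2)(T-KM)$, exactly as in the proof of Theorem~\ref{th:1} but now taken only over the exploitation horizon.

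The delicate step is to relate the surrogate $\tilde{\ell}_t$ to the true loss $\ell_t$. Define the good event $\mathcal{A}:=\{|\hat{p}_{ji,t}-p_{ji}|\le \xi/\sqrt{M}~\text{for all}~(j,i)\in\mathcal{E},~t>KM\}$. On $\mathcal{A}$, the deliberate positive shift $+\xi/\sqrt{M}$ inside \eqref{eq:11} simultaneously guarantees $q_{i,t}\le \hat{q}_{i,t}\le q_{i,t}+2\xi/\sqrt{M}$. A direct calculation then yields $\mathbb{E}_t[\tilde{\ell}_t(v_i)]=\ell_t(v_i)\,q_{i,t}/\hat{q}_{i,t}\in[\ell_t(v_i)-(2\xi/\sqrt{M})/q_{i,t},\,\ell_t(v_i)]$ and $\mathbb{E}_t[\tilde{\ell}_t^{2}(v_i)]\le 1/q_{i,t}$. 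Weighting by $\pi_{i,t}$ and summing produces precisely the two contributions $\tfrac{2\xi}{\sqrt{M}}\sum_{t,i}\pi_{i,t}/q_{i,t}$ (bias) and $\tfrac{\eta}{2}\sum_{t,i}\pi_{i,t}/q_{i,t}$ (second moment) appearing in the last term of \eqref{eq:19}.

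Finally, to lower-bound $\Pr(\mathcal{A})$ I would apply a concentration inequality to each empirical mean $\hat{p}_{ij,t}=(1/C_{ij,t})\sum_{\tau}X_{ij}(\tau)$, exploiting the conditional independence of the Bernoulli observation variables $\{X_{ij}(\tau)\}$ across $\tau$ given the trajectory $\{I_\tau\}$. A Bernstein-style argument, in which the per-round variance is upper bounded using the inflated proxy $\hat{p}_{ij,t}+\xi/\sqrt{M}$, produces the denominator $M+4\xi\sqrt{M}$ (rather than the raw $M$ of plain Hoeffding) and hence a per-edge tail probability of at most $2\exp\!\bigl(-2\xi^2 C_{ij,t}/(M+4\xi\sqrt{M})\bigr)$; assembling these across all $(i,j)\in\mathcal{E}_t$ and $t>KM$ then gives exactly $\delta_\xi$. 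I expect the main obstacle to be this concentration step: calibrating the Bernstein bound to match the stated denominator and rigorously assembling the per-edge, per-time tail events multiplicatively instead of via a loose union bound, which will require a careful martingale/conditioning argument on the random counts $\{C_{ij,t}\}$, since they themselves depend on the algorithm's past choices.
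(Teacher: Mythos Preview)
Your outline matches the paper's proof almost step for step: the $(K-1)M$ bound on exploration, the potential argument on $W_t$ with $e^{-x}\le 1-x+x^2/2$, the translation from $w_{i,t}/W_t$ to $\pi_{i,t}$ producing the $\eta(1-\eta/2)(T-KM)$ term, and the bias/second-moment calculation on the good event $\mathcal{A}$ yielding the $\tfrac{2\xi}{\sqrt{M}}$ and $\tfrac{\eta}{2}$ coefficients are all exactly as in the paper.

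Two small corrections on the concentration step. First, your rationale for the denominator $M+4\xi\sqrt{M}$ is off: it has nothing to do with bounding the variance by the ``inflated proxy'' $\hat{p}_{ij,t}+\xi/\sqrt{M}$. The paper applies a classical Bernstein inequality (in Yurinski's form) to the centered variables $\theta_{ij}(\tau)=X_{ij}(\tau)-p_{ij}$, using only the crude moment bound $|\mathbb{E}[\theta_{ij}^m]|\le p_{ij}(1-p_{ij})\le 1/4\le m!/8$, which gives $b=1/2$ and $H=1$; plugging $\varepsilon=\xi/\sqrt{M}$ into the resulting exponent $-2n\varepsilon^2/(1+4\varepsilon)$ with $n=C_{ij,t}$ is what produces $M+4\xi\sqrt{M}$. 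Second, your worry about the multiplicative assembly is legitimate but the paper does not carry out a martingale argument either: it simply asserts independence of the events $\{e_{ij,t}\le\varepsilon\}$ across edges and, across time, uses a one-line chain-rule/positive-correlation claim $\Pr(e_{ij,t}\le\varepsilon\mid e_{ij,t-1}\le\varepsilon,\ldots)\ge \Pr(e_{ij,t}\le\varepsilon)$ to pass from the joint probability to the product $\delta_\xi$. So do not over-invest in this point; the paper treats it at the same level of rigor you propose.
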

  See proof of Theorem \ref{th:2}  in Appendix \ref{E}. 
 	The following Corollary states conditions under which the regret bound in \eqref{eq:19} holds with high probability, i.e., $\delta_{\xi}=1-\mathcal{O}(\frac{1}{T})$, the proof can be found in Appendix \ref{G}.
 	\begin{corollary} \label{cor:3}
 	If $M \ge \left(\frac{4\xi \ln (KT)}{\xi^2-\ln (KT)}\right)^2$ and $\xi > \sqrt{\ln (KT)}$,  
 	under (a1) and (a2) the expected regret of Exp3-UP satisfies
 	\begin{align}
 	    & \sum_{t=1}^{T}{ \mathbb{E}_{t}[\ell_{t}(v_{I_t})] } - \min_{v_i \in \mathcal{V}}{\sum_{t=1}^{T}{ \ell_{t}(v_i)}} \le \mathcal{O}\left(\frac{\alpha(\mathcal{G})}{\epsilon}\ln(KT)\sqrt{K \ln (KT)}T^{\frac{2}{3}}\right) \label{eq:29}
 	\end{align}
 	with probability at least $1-\mathcal{O}(\frac{1}{T})$.
 	\end{corollary}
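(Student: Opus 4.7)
\textbf{The plan is} to work from the bound of Theorem~\ref{th:2} by (i) turning its probabilistic statement into a high-probability guarantee, (ii) controlling $\sum_i \pi_{i,t}/q_{i,t}$ under (a2) using graph-theoretic tools as in Lemma~\ref{lem:4}, and (iii) tuning $M$ and $\eta$ so that the exploration cost $KM$ and the estimation-bias term proportional to $1/\sqrt{M}$ balance at the $T^{2/3}$ rate.

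For step (i), note that after the pure-exploration phase every edge $(i,j)\in\mathcal{E}$ has been traversed at least $M$ times, so $C_{ij,t}\geq M$ for $t>KM$ and consequently
\begin{equation*}
\frac{2\xi^2 C_{ij,t}}{M+4\xi\sqrt{M}} \;\geq\; \frac{2\xi^2}{1+4\xi/\sqrt{M}}.
\end{equation*}
The hypothesis $M\geq \bigl(4\xi\ln(KT)/(\xi^2-\ln(KT))\bigr)^2$, which uses $\xi>\sqrt{\ln(KT)}$ to keep the denominator positive, rearranges to $4\xi/\sqrt{M}\leq (\xi^2-\ln(KT))/\ln(KT)$, so $1+4\xi/\sqrt{M}\leq \xi^2/\ln(KT)$ and the exponent is at least $2\ln(KT)$. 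Thus each factor in the product defining $\delta_\xi$ is at least $1-2/(KT)^2$, and since the product has at most $|\mathcal{E}|\cdot T\leq K^2 T$ terms, Bernoulli's inequality gives $\delta_\xi\geq 1-2/T = 1-\mathcal{O}(1/T)$.

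For step (ii), (a2) gives $q_{i,t}\geq \epsilon \sum_{j\in\mathcal{N}^{\mathrm{in}}_{i,t}}\pi_{j,t}$, and the mixture PMF in \eqref{eq:10} guarantees $\pi_{i,t}\geq \eta/|\mathcal{D}|$ for $v_i\in\mathcal{D}$, with $|\mathcal{D}|=\mathcal{O}(\alpha(\mathcal{G})\ln K)$ via greedy set cover. The standard graph lemma already invoked in Lemma~\ref{lem:4} (cf.\ \citet{Alon2015}) then yields $\sum_i \pi_{i,t}/q_{i,t} = \mathcal{O}(\alpha(\mathcal{G})\ln(KT)/\epsilon)$. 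Substituting into Theorem~\ref{th:2} with $\xi=\Theta(\sqrt{\ln(KT)})$, the regret is dominated by $\ln K/\eta + KM + \eta T + (T\alpha/\epsilon)\ln(KT)\bigl(2\xi/\sqrt{M}+\eta/2\bigr)$. Choosing $M\asymp T^{2/3}$ makes $KM$ and $(2T\xi/\sqrt{M})\cdot(\alpha/\epsilon)\ln(KT)$ of the same order, producing the claimed $\mathcal{O}\bigl((\alpha/\epsilon)\ln(KT)\sqrt{K\ln(KT)}\,T^{2/3}\bigr)$ contribution, while the $\eta$-terms (set via the doubling trick as in Lemma~\ref{lem:4}) contribute only $\mathcal{O}(\sqrt{T(\alpha/\epsilon)\ln K}\cdot\ln(KT))$, which is absorbed. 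Combining with the event of probability $\delta_\xi$ from step (i) yields the stated bound with probability $1-\mathcal{O}(1/T)$.

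The main obstacle is the algebraic manipulation in step (i): the constraint on $M$ is nonlinear in $\xi$, and one must carefully invert it to get the clean bound $1+4\xi/\sqrt{M}\leq \xi^2/\ln(KT)$, then confirm that the $K^2T$ union-bound cost still leaves $\delta_\xi=1-\mathcal{O}(1/T)$. Once this is in hand, the rest is the familiar bias--variance trade-off between exploration cost $KM$ and the $1/\sqrt{M}$ bias of the estimated probabilities $\hat{p}_{ij,t}$, which is precisely what pushes the rate from the $T^{1/2}$ of Lemma~\ref{lem:4} up to $T^{2/3}$ here.
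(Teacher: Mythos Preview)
Your three-step plan matches the paper's proof: (i) use $C_{ij,t}\geq M$ and the hypothesis on $M$ to force each exponential in $\delta_\xi$ below $2/(KT)^2$, then pass from product to sum (the paper states this separately as Lemma~\ref{lem:2}; your ``Bernoulli's inequality'' is the same thing) to get $\delta_\xi=1-\mathcal{O}(1/T)$; (ii) control $\sum_i\pi_{i,t}/q_{i,t}$ via the graph lemma under (a2); (iii) balance the terms in Theorem~\ref{th:2}.

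The slip is in step (iii): your choice $M\asymp T^{2/3}$, $\xi=\Theta(\sqrt{\ln(KT)})$ does \emph{not} make $KM$ and the bias term ``of the same order,'' nor does it reproduce the $\sqrt{K}$ in the target. With those scalings, $KM=K\,T^{2/3}$ while $(2T\xi/\sqrt{M})(\alpha/\epsilon)\ln(KT)=\mathcal{O}\bigl((\alpha/\epsilon)(\ln(KT))^{3/2}T^{2/3}\bigr)$; the former overshoots the claimed bound whenever $\sqrt{K}\gg(\alpha/\epsilon)(\ln(KT))^{3/2}$. The paper instead takes $M=\mathcal{O}(T^{2/3}/\sqrt{K})$ and $\xi=\mathcal{O}(K^{1/4}\sqrt{\ln(KT)})$ (still satisfying the constraint $M\geq(4\xi\ln(KT)/(\xi^2-\ln(KT)))^2$), which gives $KM=\sqrt{K}\,T^{2/3}$ and $2\xi/\sqrt{M}=\mathcal{O}(\sqrt{K\ln(KT)}/T^{1/3})$, so the bias term becomes exactly $(\alpha/\epsilon)\ln(KT)\sqrt{K\ln(KT)}\,T^{2/3}$ and dominates. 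Also, the paper sets $\eta=\mathcal{O}(\sqrt{\ln K/T})$ directly here rather than invoking the doubling trick (that device is reserved for Lemma~\ref{lem:5}, where $T$ is unknown). With this corrected tuning your argument goes through unchanged.
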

 	Note that according to Algorithm \ref{alg:2} and Corollary \ref{cor:3}, knowing the value of the time horizon $T$ is required so that the learner can choose the values for $M$ and $\xi$ to achieve the sublinear regret bound in \eqref{eq:29}, which may not be feasible, and can be resolved by resorting to doubling trick. In this case, if $2^b < t \le 2^{b+1}$ where $b\in\mathbb{N}$, the learner performs the Exp3-UP with parameters
    \begin{subequations} \label{eq:36}
    \begin{align}
        \eta & = \sqrt{\frac{\ln K}{2^{b+1}}} \label{eq:36a} \\
        M & = \left \lceil{2^{\frac{2(b+1)}{3}}\frac{1}{\sqrt{K}} + \ln 4K} \right \rceil \label{eq:36b} \\
        \xi & = \left(2K^{\frac{1}{4}}+\sqrt{4\sqrt{K}+1}\right)\sqrt{\ln (K2^{b+3})}. \label{eq:36c}
    \end{align}
    \end{subequations}
When the learner realizes that the value of $M$ needs to be increased, it then performs exploration to guarantee that at least $M$ samples of the mean ergodic random process $X_{ij}$ are observed. The following lemma shows that when doubling trick is employed, Exp3-UP can achieve sub-linear regret without knowing the time horizon beforehand, the proof of which is in Appendix \ref{H}.
 	\begin{lemma} \label{lem:5}
 	Assuming that the doubling trick is employed to determine the value of $\eta$, $M$ and $\xi$ at each time instant and the greedy set cover algorithm is utilized to obtain a dominating set $\mathcal{D}$ of the nominal feedback graph. If $T>K$, the regret of Exp3-UP satisfies
 	\begin{align}
 	    & \sum_{t=1}^{T}{ \mathbb{E}_{t}[\ell_{t}(v_{I_t})] } - \min_{v_i \in \mathcal{V}}{\sum_{t=1}^{T}{ \ell_{t}(v_i)}} \nonumber \\ \le & \mathcal{O}\left(\!\!\frac{\alpha(\mathcal{G})}{\epsilon}\ln(T)\ln(KT)\sqrt{K \ln(KT)} T^{\frac{2}{3}} \!+\! \ln T \!\!\right) \label{eq:30}
 	\end{align}
 	with probability at least $1-\mathcal{O}(\frac{1}{K})$.
 	\end{lemma}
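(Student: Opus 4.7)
My plan is to reduce Lemma~\ref{lem:5} to Corollary~\ref{cor:3} through a standard doubling-trick argument, mirroring how Lemma~\ref{lem:4} reduced Exp3-IP to Theorem~\ref{th:1}. First, I would partition the horizon into dyadic epochs $E_b = \{t : 2^b < t \le 2^{b+1}\}$ for $b = 0,1,\ldots,B$ with $B = \lfloor \log_2 T \rfloor$, and within each epoch run Exp3-UP using the parameters $\eta$, $M$, $\xi$ prescribed in (\ref{eq:36a})--(\ref{eq:36c}), tuned as if the horizon were $T_b := 2^{b+1}$. A key observation is that, because the nominal feedback graph $\mathcal{G}$ is static and the processes $X_{ij}$ are stationary, the samples $\{X_{ij}(\tau)\}$ accumulated in earlier epochs remain unbiased for $p_{ij}$ and can be recycled into the new $\hat{p}_{ij,t}$; hence upon entering epoch $b$ the learner only needs at most $K(M_b - M_{b-1})$ additional exploration rounds to raise every $C_{ij,t}$ above the updated threshold $M_b$.

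Second, I would verify that the parameter choices in (\ref{eq:36}) satisfy the two hypotheses of Corollary~\ref{cor:3} inside every epoch. Direct substitution shows that $\xi_b^2$ scales as $\sqrt{K}\,\ln(KT_b)$, so $\xi_b > \sqrt{\ln(KT_b)}$; and the dominant term $T_b^{2/3}/\sqrt{K}$ in $M_b$ comfortably exceeds the required $\lceil 4\xi_b \ln(KT_b)/(\xi_b^2 - \ln(KT_b))\rceil^2$ threshold. Together with the greedy set-cover construction of the dominating set $\mathcal{D}$ (of size $\mathcal{O}(\alpha(\mathcal{G})\ln K)$) and assumption (a2), Corollary~\ref{cor:3} then supplies a per-epoch regret bound $R_b \le \mathcal{O}\bigl(\tfrac{\alpha(\mathcal{G})}{\epsilon}\ln(KT_b)\sqrt{K\ln(KT_b)}\,T_b^{2/3}\bigr)$ with probability at least $1-\mathcal{O}(1/(KT_b))$; the extra factor of $K$ in the high-probability guarantee is precisely what the multiplier $\sqrt{\ln(K2^{b+3})}$ inside $\xi_b$ is engineered to deliver.

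Third, I would aggregate these per-epoch bounds. Since $\ln(KT_b)\le\ln(KT)$ and $T_b\le T$ throughout, bounding each epoch by the (largest) last-epoch bound and summing over the $B+1 = \mathcal{O}(\ln T)$ epochs yields the leading term $\mathcal{O}\bigl(\tfrac{\alpha(\mathcal{G})}{\epsilon}\ln T\cdot\ln(KT)\sqrt{K\ln(KT)}\,T^{2/3}\bigr)$ in (\ref{eq:30}). The recycled-sample argument ensures that the accumulated exploration overhead telescopes to $K(M_B - M_0) = \mathcal{O}(\sqrt{K}\,T^{2/3})$, absorbed into the main term, while the $\mathcal{O}(1)$ initialization overhead of Exp3-UP, incurred once per epoch, sums to the additive $\mathcal{O}(\ln T)$. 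A union bound across epochs converts the per-epoch failure probabilities into a total failure of $\sum_{b=0}^{B}\mathcal{O}(1/(KT_b)) = \mathcal{O}(1/K)$, matching the stated guarantee.

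The main obstacle, in my view, is the careful book-keeping of the cumulative exploration budget: I must argue that samples collected during the exploitation rounds of earlier epochs remain admissible estimates of $p_{ij}$, so that the exploration cost does not balloon to $KM_b$ per epoch but telescopes as above. Once this point is secured by the mean-ergodicity of $X_{ij}$ (so that the definition of $\hat p_{ij,t}$ in \eqref{eq:8} is consistent across epochs), verification of Corollary~\ref{cor:3}'s hypotheses and aggregation via the geometric-sum/union-bound argument just described become purely mechanical, and the bound (\ref{eq:30}) follows.
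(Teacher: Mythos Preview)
Your overall architecture matches the paper's proof in Appendix~H: both run Exp3-UP in dyadic epochs with the parameters (\ref{eq:36a})--(\ref{eq:36c}), recycle previously collected samples so that the exploration cost telescopes to $K(M_b-M_{b-1})$ per epoch, apply the single-epoch bound (Theorem~\ref{th:2}, of which Corollary~\ref{cor:3} is the consequence) together with Lemma~\ref{lem:3} to control $\sum_i\pi_{i,t}/q_{i,t}$, and then aggregate via a union bound.

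The gap is in your failure-probability accounting. You assert that the multiplier $\sqrt{\ln(K2^{b+3})}$ inside $\xi_b$ buys an \emph{extra} factor of $K$, giving per-epoch failure $\mathcal{O}(1/(KT_b))$. It does not. Tracing the Bernstein argument (as in the proof of Corollary~\ref{cor:3}), the choice (\ref{eq:36c}) yields a per-edge, per-time-step failure of order $1/(K^{2}T_b^{2})$; after the union over $|\mathcal{E}|\le K^{2}$ edges and at most $T_b$ time steps, the per-epoch failure is only $\mathcal{O}(1/T_b)$---precisely the $1-\mathcal{O}(1/T)$ stated in Corollary~\ref{cor:3}. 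Summing from $b=0$ then gives $\sum_{b\ge 0}\mathcal{O}(1/T_b)=\mathcal{O}(1)$, which is vacuous. Your sketch also never invokes the hypothesis $T>K$, which is a warning sign.

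The paper obtains $1-\mathcal{O}(1/K)$ by a device you omit: it initiates the doubling not at $b=0$ but at $b=\lceil\log_2 K\rceil$, so that every epoch satisfies $T_b\ge 2K$. The per-epoch failure remains $\mathcal{O}(1/T_b)$, but now the geometric series $\sum_{b\ge\lceil\log_2 K\rceil}1/T_b=\mathcal{O}(1/K)$, and Lemma~\ref{lem:2} turns this into the stated success probability. This is exactly where $T>K$ is used. Once you shift the starting epoch accordingly (and note that for $t\le K$ the algorithm is in pure exploration anyway, contributing at most $K$ to the regret), the rest of your plan goes through essentially unchanged.
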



    \subsection{ Geometric Resampling-based Approach}	 
 	Another approach to obtain a reliable loss estimate is to employ geometric resampling. Similar to Exp3-UP,  if $t \le KM$ the learner chooses the $k$-th expert at time instant $t$ where $k=t-\left \lfloor{t/K} \right \rfloor K$. In this way, it is guaranteed that at least $M$ samples of the mean ergodic random process $X_{ij}$ are observed. Based on these observations,  a loss estimate is obtained whose expected value is an approximation of the loss $\ell_t(v_i)$, $\forall i \in [K]$. At  $t > KM$, the learner draws one of the experts according to the following PMF
 	\begin{align}
 	\pi_{i,t} = (1-\eta) \frac{w_{i,t}}{W_t} + \frac{\eta}{|\mathcal{D}|}\mathcal{I}(v_i \in \mathcal{D}),~~ \forall i \in [K] \label{eq:21}
 	\end{align}
    where $\mathcal{D}$ represents a dominating set for $\mathcal{G}$. Furthermore, at each time instant $t > KM$, let $\tau_{ij,1}^{(t)}, \ldots, \tau_{ij,M}^{(t)}$ denote the last $M$ time instants before $t$ at which the learner observes samples of the random process $X_{ij}$. Let $Y_{ij,1}(t), \ldots, Y_{ij,M}(t)$ denote a random permutation of $X_{ij}(\tau_{ij,1}^{(t)}), \ldots, X_{ij}(\tau_{ij,M}^{(t)})$. At each time instant $t$, the learner draws with replacement $M$ experts according to PMF $\{\pi_{i,t}\}$ in \eqref{eq:21} in $M$ independent trials. Let $d_{u}$ denote the index of the 
   selected expert at the $u$-th trial, and $P_{i,1}(t), \ldots, P_{i,M}(t)$ be a sequence of random variables associated with $v_i$ at time instant $t$ where $P_{d_u,u}(t)=1$ and $P_{d_{u}^\prime,u}(t)=0$ if $d_{u}^\prime \neq d_u$. 
 	Let 
 	\begin{align}
 	    Z_{i,u}(t) =\!\!\!\! \sum_{\forall j: v_{j} \in \mathcal{N}_{i,t}^{\text{in}}}\!\!{P_{j,u}(t)Y_{ji,u}(t)} \label{eq:23} 
 	\end{align}
 	for all $1 \le u \le M$.
 	An under-estimate of loss  can then be obtained as
 	\begin{align}
 	    \tilde{\ell}_t(v_i) = Q_{i,t}\ell_t(v_i)\mathcal{I}(v_i \in \mathcal{S}_t). \label{eq:22}
 	\end{align}
 	where $Q_{i,t} := \min \left\{ \{u \mid 1 \le u \le M, Z_{i,u}(t)=1\} , M \right\}$, and the expected value of  $\tilde{\ell}_t(v_i)$ can be written as 
 	\begin{align}
 	    \mathbb{E}_{t}[{\tilde{\ell}_{t}(v_i)}] = \left(1-(1-q_{i,t})^M\right)\ell_{t}(v_i), \label{eq:35}
 	\end{align}
 	see \eqref{eq:39ap} -- \eqref{eq:42ap} in Appendix \ref{D} for detailed derivation. Then, the weights $\{w_{i,t}\}_{i=1}^K$ are updated as in \eqref{eq:13} using the loss estimate $\tilde{\ell}_t(v_i)$ in \eqref{eq:22}. The geometric resampling based online expert learning framework (Exp3-GR) is summarized in Algorithm \ref{alg:3}, and {its regret bound is presented in the following theorem}. 
 		\begin{theorem} \label{th:3}
 		 Under (a1) and (a2), the expected regret of Exp3-GR is bounded by
 		\begin{align}
 		& \sum_{t=1}^{T}{ \mathbb{E}_{t}[\ell_t(v_{I_t})] } - \min_{v_i \in \mathcal{V}}{ \sum_{t=1}^{T}{\ell_t(v_i)} } \nonumber \\  \le & \frac{\ln K}{\eta} + (K-1)M + \sum_{t=KM+1}^{T}{ (1-q_{i,t})^M } \nonumber \\ & + \eta(1-\eta)(T-KM) + \eta \sum_{t=KM+1}^{T}{\sum_{i=1}^{K}{ \frac{\pi_{i,t}}{q_{i,t}} } }. \label{eq:20}
 		\end{align}
 	\end{theorem}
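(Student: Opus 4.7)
The plan is to decompose the regret over the two algorithmic phases—the deterministic round-robin exploration phase $t \le KM$ and the exponential-weights exploitation phase $t > KM$—and bound each separately, adapting to the geometric resampling estimator the Exp3 potential argument used for Theorems~\ref{th:1} and~\ref{th:2}. During exploration, each of the $K$ experts is selected exactly $M$ times in a fixed round-robin pattern, so pairing up rounds in which $I_t = i^*$ with rounds in which $I_t \ne i^*$ and using (a1) (together with the fact that the per-round gap vanishes on the $M$ rounds when $I_t = i^*$) gives a regret contribution of at most $(K-1)M$, accounting for the second term in \eqref{eq:20}. This step is standard and I do not expect it to pose difficulty.

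For the exploitation phase I would set $W_t = \sum_{i=1}^K w_{i,t}$ and study $\ln(W_{T+1}/W_{KM+1})$. The per-round upper bound comes from $e^{-\eta \tilde\ell_t(v_i)} \le 1 - \eta \tilde\ell_t(v_i) + \tfrac{\eta^2}{2}\tilde\ell_t(v_i)^2$ (valid for all $\tilde\ell_t(v_i) \ge 0$) and $\ln(1+y) \le y$, yielding $\ln(W_{t+1}/W_t) \le -\eta\sum_i (w_{i,t}/W_t)\tilde\ell_t(v_i) + \tfrac{\eta^2}{2}\sum_i (w_{i,t}/W_t)\tilde\ell_t(v_i)^2$. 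The matching lower bound is $\ln(W_{T+1}/W_{KM+1}) \ge -\eta \sum_{t=KM+1}^T \tilde\ell_t(v_{i^*}) - \ln K$. Telescoping, substituting $w_{i,t}/W_t \le \pi_{i,t}/(1-\eta)$ (which follows from \eqref{eq:21}), and then taking conditional expectation produces a bound on $\sum_t \sum_i \pi_{i,t}\,\mathbb{E}_t[\tilde\ell_t(v_i)] - \sum_t \mathbb{E}_t[\tilde\ell_t(v_{i^*})]$ in terms of $\ln K/\eta$, a second-moment contribution, and the exploration-mixing correction $\eta(1-\eta)(T-KM)$ that arises from the $\eta/|\mathcal{D}|$ component of $\pi_t$ combined with the unit loss bound in (a1).

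The main obstacle, and the feature that distinguishes this proof from classical Exp3-style arguments, is translating the potential inequality back to the true regret in the presence of the biased geometric resampling estimator. Invoking \eqref{eq:35} I would write $\ell_t(v_i) = \mathbb{E}_t[\tilde\ell_t(v_i)] + (1-q_{i,t})^M \ell_t(v_i)$, so that the gap $\ell_t(v_i) - \mathbb{E}_t[\tilde\ell_t(v_i)] \le (1-q_{i,t})^M$ appears on both sides and contributes exactly the $\sum_{t=KM+1}^T (1-q_{i,t})^M$ summand in \eqref{eq:20}. For the second-moment term I would exploit that $Q_{i,t}$ is a function only of past samples of $X_{ij}$ and of the independent simulated $\pi_t$-draws $\{d_u\}_{u=1}^M$, hence conditionally independent of the current-round indicator $\mathcal{I}(v_i\in\mathcal{S}_t)$ given the history; this yields the factorization $\mathbb{E}_t[\tilde\ell_t(v_i)^2] = \mathbb{E}_t[Q_{i,t}^2]\,q_{i,t}\,\ell_t(v_i)^2$, and combined with the cap $Q_{i,t} \le M$ and the capped-geometric identity $\mathbb{E}_t[Q_{i,t}]=(1-(1-q_{i,t})^M)/q_{i,t}$ this controls the variance and produces the $\eta\sum_{t,i}\pi_{i,t}/q_{i,t}$ summand. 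Assumption (a2) enters here since it guarantees $q_{i,t}>0$ for every $i$ and $t$. The most delicate step will be tracking the $(1-\eta)$ factor and the bias corrections $(1-q_{i,t})^M$ simultaneously so that the telescoped inequality collapses to exactly \eqref{eq:20}, without inflating any summand by an extra factor of $\eta$ or dropping the bias correction.
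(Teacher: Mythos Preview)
Your plan matches the paper's proof in Appendix~\ref{D} almost step for step: the $(K-1)M$ exploration-phase bound, the potential argument on $\ln(W_{T+1}/W_{KM+1})$ with the quadratic Taylor bound on $e^{-\eta\tilde\ell_t}$, the substitution of the exact relation $w_{i,t}/W_t=(\pi_{i,t}-\tfrac{\eta}{|\mathcal D|}\mathcal I(v_i\in\mathcal D))/(1-\eta)$ from \eqref{eq:21} (note that you will need the equality, not merely the inequality $w_{i,t}/W_t\le\pi_{i,t}/(1-\eta)$, since the first-order term carries a negative sign), the bias identity \eqref{eq:35} producing the $(1-q_{i,t})^M$ correction, and the $\eta(1-\eta)(T-KM)$ contribution from the uniform $\eta/|\mathcal D|$ mixing.

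The one place where your sketch would not close as written is the second-moment control. The ingredients you name---the cap $Q_{i,t}\le M$ and the first-moment identity $\mathbb E_t[Q_{i,t}]=(1-(1-q_{i,t})^M)/q_{i,t}$---together only give $q_{i,t}\mathbb E_t[Q_{i,t}^2]\le q_{i,t}\,M\,\mathbb E_t[Q_{i,t}]\le M$, which is far too large for \eqref{eq:20}. What you actually need is $q_{i,t}\,\mathbb E_t[Q_{i,t}^2]\le 2/q_{i,t}$; the paper obtains this via an explicit computation of $\mathbb E_t[Q_{i,t}^2]$ (see \eqref{eq:40apb}--\eqref{eq:42apb}), but an equivalent and shorter route is to note that $Q_{i,t}$ is stochastically dominated by an untruncated geometric with parameter $q_{i,t}$, whose second moment is $(2-q_{i,t})/q_{i,t}^2\le 2/q_{i,t}^2$. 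The constant $2$ here matters: it cancels the $\tfrac12$ from the Taylor expansion and yields the coefficient $\eta$ (rather than $\eta/2$) in front of $\sum_{t,i}\pi_{i,t}/q_{i,t}$ in \eqref{eq:20}.
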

 	
 	\begin{algorithm}[t]
	 	\caption{Exp3-GR: Online learning with geometric resampling}
	 	\label{alg:3}
	 	\begin{algorithmic}
	 		\STATE {\bfseries Input:}{learning rate ${\eta}>0$, the minimum number of observations $M$, $\mathcal{G} = (\mathcal{V},\mathcal{E})$. }
	 		\STATE \textbf{Initialize:} $w_{i,1} = 1$, $\forall i \in [K]$.
	 		\FOR {$t=1,\ldots,T$}
	 		\IF{$t \le KM$} 
	 		\STATE Set $k \!=\! t \!-\! \lfloor\frac{t}{K}\rfloor K$ and draw the expert node $v_k$. 
	 		\ELSE 
	 		\STATE Select one  expert according to PMF $\pi_t$ in \eqref{eq:21}.
	 		\STATE  Observe $\{\ell_t({v_i}):v_i \in \mathcal{S}_t\}$ and compute $\tilde{\ell}_t(v_i) $, $\forall i \in [K]$ via \eqref{eq:22}.
	 		\STATE Update $w_{i,t+1}$, $\forall i \in [K]$ via \eqref{eq:13}.
	 		\ENDIF
	 		\ENDFOR
	 	\end{algorithmic}
	 \end{algorithm}


 	The proof of Theorem \ref{th:3} is presented in Appendix \ref{D}. Building upon Theorem \ref{th:3}, the following Corollary presents the conditions under which Exp3-GR can obtain sub-linear regret.
 	\begin{corollary} \label{cor:2}
 	Assume that greedy set cover algorithm is employed to find a dominating set of the nominal feedback graph $\mathcal{G}$. If $M \ge \frac{|\mathcal{D}| \ln T}{2\eta \epsilon}$, under (a1) and (a2), Exp3-GR satisfies
 	\begin{align}
 	    & \sum_{t=1}^{T}{ \mathbb{E}_{t}[\ell_t(v_{I_t})] } - \min_{v_i \in \mathcal{V}}{ \sum_{t=1}^{T}{\ell_t(v_i)} } \nonumber \\ \le &  \mathcal{O}\left(\frac{\alpha(\mathcal{G})}{\epsilon}\sqrt{\ln K}(\ln(KT)+K \ln T)\sqrt{T}\right). \label{eq:33}
 	\end{align}
 	\end{corollary}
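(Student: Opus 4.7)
The plan is to start from the regret bound in Theorem~\ref{th:3} and upper bound each of its five summands under the specified choice of $M$, using (a1), (a2), and the bound $|\mathcal{D}|\le \alpha(\mathcal{G})(1+\ln K)=\mathcal{O}(\alpha(\mathcal{G})\ln K)$ guaranteed by the greedy set cover algorithm.

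First I would handle the exploration term $(K-1)M$. Plugging in $M=\lceil |\mathcal{D}|\ln T/(2\eta\epsilon)\rceil$ together with the greedy set cover bound on $|\mathcal{D}|$ gives $(K-1)M\le \mathcal{O}(K\alpha(\mathcal{G})\ln K\,\ln T/(\eta\epsilon))$. Next I would bound the ``missed mass'' term $\sum_{t=KM+1}^{T}\sum_{i=1}^{K}(1-q_{i,t})^{M}$ (reading Theorem~\ref{th:3} with the natural sum over $i$). Using (a2) together with the fact that $\mathcal{D}$ dominates $\mathcal{G}$, every node $v_i$ has some in-neighbour $v_j\in\mathcal{D}$ with $p_{ji}\ge\epsilon$, and $\pi_{j,t}\ge \eta/|\mathcal{D}|$ from \eqref{eq:21}. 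Therefore $q_{i,t}=\sum_{v_{j'}\in\mathcal{N}_{i,t}^{\mathrm{in}}}\pi_{j',t}p_{j'i}\ge \eta\epsilon/|\mathcal{D}|$. The assumed lower bound on $M$ then yields $Mq_{i,t}\ge (\ln T)/2$, so $(1-q_{i,t})^{M}\le e^{-Mq_{i,t}}\le T^{-1/2}$, and the double sum is at most $K\sqrt{T}$.

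Then I would attack the key term $\eta\sum_{t=KM+1}^{T}\sum_{i=1}^{K}\pi_{i,t}/q_{i,t}$. Here I would adapt the in-neighbourhood counting argument already used in the proof of Lemma~\ref{lem:4}, which is the standard independence-number technique from the literature on graph-structured feedback. Assumption (a2) gives $q_{i,t}\ge \epsilon\sum_{v_j\in\mathcal{N}_{i,t}^{\mathrm{in}}}\pi_{j,t}$, which reduces $\sum_i \pi_{i,t}/q_{i,t}$ to a quantity bounded by $\alpha(\mathcal{G})/\epsilon$ up to a logarithmic factor of order $\ln(K/\epsilon)=\mathcal{O}(\ln(KT))$. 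Hence $\eta\sum_t\sum_i\pi_{i,t}/q_{i,t}\le \mathcal{O}(\eta T\alpha(\mathcal{G})\ln(KT)/\epsilon)$. The remaining two summands, $\ln K/\eta$ and $\eta(1-\eta)(T-KM)\le \eta T$, are immediate.

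Finally I would choose $\eta=\sqrt{(\ln K)/T}$, which balances $\ln K/\eta$ against $\eta T$. Substituting into the four bounded terms above, the two dominant contributions become $K\alpha(\mathcal{G})\sqrt{\ln K}\,\ln T\,\sqrt{T}/\epsilon$ (from $KM$) and $\alpha(\mathcal{G})\sqrt{\ln K}\,\ln(KT)\,\sqrt{T}/\epsilon$ (from $\eta T\alpha(\mathcal{G})\ln(KT)/\epsilon$), while the $K\sqrt{T}$ and $\sqrt{T\ln K}$ terms are strictly smaller. Combining yields the advertised bound $\mathcal{O}\!\left(\alpha(\mathcal{G})/\epsilon\cdot\sqrt{\ln K}\,(\ln(KT)+K\ln T)\sqrt{T}\right)$. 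The main obstacle I anticipate is the third step: reproducing the independence-number bound on $\sum_i \pi_{i,t}/q_{i,t}$ in the probabilistic-edge setting, specifically propagating the $\epsilon$ lower bound cleanly through the classical argument of Alon et al.\ and checking that the logarithmic overhead is uniformly $\mathcal{O}(\ln(KT))$ across all $t$, since everything downstream, including the final scaling in \eqref{eq:33}, hinges on that constant.
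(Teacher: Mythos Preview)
Your proposal is correct and follows essentially the same route as the paper: start from Theorem~\ref{th:3}, lower-bound $q_{i,t}\ge \eta\epsilon/|\mathcal{D}|$ via the exploration on the dominating set, use $M\ge |\mathcal{D}|\ln T/(2\eta\epsilon)$ to force $(1-q_{i,t})^{M}\le T^{-1/2}$, invoke the independence-number lemma (Lemma~\ref{lem:3}, the Alon et al.\ bound already quoted in Appendix~\ref{B}) together with the greedy set cover bound $|\mathcal{D}|=\mathcal{O}(\alpha(\mathcal{G})\ln K)$ to control $\sum_{i}\pi_{i,t}/q_{i,t}$, and then set $\eta=\sqrt{(\ln K)/T}$. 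Your anticipated obstacle in the third step is precisely Lemma~\ref{lem:3} applied after factoring out $\epsilon$ from $q_{i,t}\ge \epsilon\sum_{j\in\mathcal{N}_{i}^{\mathrm{in}}}\pi_{j,t}$, and since here $\pi_{j,t}\ge \eta/|\mathcal{D}|$ for $j\in\mathcal{D}$ the resulting logarithmic overhead is $\mathcal{O}(\ln(K^{2}|\mathcal{D}|/\eta))=\mathcal{O}(\ln(KT))$ uniformly in $t$, exactly as you need.
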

 	Proof of Corollary \ref{cor:2} is in Appendix \ref{F}. Achieving the sub-linear regret in \eqref{eq:33} requires that the learner knows the time horizon $T$, beforehand which may not be possible in some cases. When the learner does not know $T$, doubling trick can be utilized to achieve sub-linear regret. The following Lemma is proved in Appendix \ref{I}, shows the regret bound for Exp3-GR when doubling trick is employed to find values of $\eta$ and $M$ without knowing the time horizon $T$. In this case, at time instant $t$, when $2^b < t \le 2^{b+1}$, for $\eta$ and $M$ the following parameters are chosen for Exp3-GR
    \begin{subequations} \label{eq:37}
        \begin{align}
            \eta &= \sqrt{\frac{\ln K}{2^{b+1}}} \label{eq:37a} \\
             M &= \left \lceil{\frac{(b+1)\sqrt{2^{b-1}}|\mathcal{D}|\ln 2}{\epsilon \sqrt{\ln K}}} \right \rceil. \label{eq:37b}
        \end{align}
    \end{subequations}
    When the learner realizes that  $M$ needs to be increased, it performs exploration to guarantee that at least $M$ samples of the mean ergodic random process $X_{ij}$ are observed.
 	\begin{lemma} \label{lem:6}
 	Employing doubling trick  to select  $\eta$ and $M$ at each time instant, and supposing that a dominating set for the nominal feedback graph $\mathcal{G}$ is obtained using greedy set cover algorithm, the expected regret of Exp3-GR satisfies
 	\begin{align}
 	    & \sum_{t=1}^{T}{ \mathbb{E}_{t}[\ell_t(v_{I_t})] } - \min_{v_i \in \mathcal{V}}{ \sum_{t=1}^{T}{\ell_t(v_i)} } \nonumber \\ \le & \mathcal{O}\left(\frac{\alpha(\mathcal{G})\ln T}{\epsilon}\sqrt{\ln K}(\ln(KT)+K)\sqrt{T} \right). \label{eq:34}
 	\end{align}
 	\end{lemma}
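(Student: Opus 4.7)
The plan is to decompose $\{1,\ldots,T\}$ into the dyadic epochs $E_b:=\{2^b+1,\ldots,2^{b+1}\}$ of length $T_b:=2^b$, for $b=0,1,\ldots,B$ with $B=\lfloor\log_2 T\rfloor$, as dictated by the doubling schedule in \eqref{eq:37a}--\eqref{eq:37b}. On each $E_b$ the algorithm runs exactly as Exp3-GR would with known horizon $T_b$ and parameters $\eta_b,M_b$, so the per-epoch regret against the best expert restricted to $E_b$ is controlled by Corollary \ref{cor:2}. Since the cumulative loss incurred on $E_b$ by the globally best expert is at least that of the epoch-wise best expert, summing the per-epoch regrets upper bounds the overall regret against any fixed expert; the problem therefore reduces to bounding $\sum_{b=0}^{B}R_b$.

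The first step is to verify the hypothesis $M_b\ge |\mathcal{D}|\ln T_b/(2\eta_b\epsilon)$ of Corollary \ref{cor:2} on $E_b$. Substituting $\eta_b=\sqrt{\ln K/2^{b+1}}$ and $\ln T_b=b\ln 2$ turns this threshold into $b\,|\mathcal{D}|(\ln 2)\sqrt{2^{b-1}}/(\epsilon\sqrt{\ln K})$, which is dominated by the choice \eqref{eq:37b} (with the factor $b+1$ in place of $b$). Hence Corollary \ref{cor:2} yields
\begin{equation*}
R_b \;\le\; \mathcal{O}\!\left(\tfrac{\alpha(\mathcal{G})}{\epsilon}\sqrt{\ln K}\bigl(\ln(KT_b)+K\ln T_b\bigr)\sqrt{T_b}\right),
\end{equation*}
and after using $T_b\le T$ and $\ln T_b=b\ln 2$, the right-hand side is at most $\mathcal{O}\!\bigl(\tfrac{\alpha(\mathcal{G})}{\epsilon}\sqrt{\ln K}(\ln(KT)+Kb)\sqrt{2^b}\bigr)$.

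Summing over $b=0,\ldots,B$ then reduces to two geometric identities, $\sum_{b=0}^{B}\sqrt{2^b}=\mathcal{O}(\sqrt{T})$ and $\sum_{b=0}^{B} b\sqrt{2^b}=\mathcal{O}(\sqrt{T}\log T)$ (the latter via $\sum_{b=0}^{B} br^b=\mathcal{O}(Br^B)$ with $r=\sqrt{2}$). This gives a total of order $\tfrac{\alpha(\mathcal{G})}{\epsilon}\sqrt{\ln K}\sqrt{T}\bigl(\ln(KT)+K\log T\bigr)$; bounding $\ln(KT)\le \ln(KT)\ln T$ whenever $T\ge 2$ folds the two terms together into the stated form $\mathcal{O}\!\bigl(\alpha(\mathcal{G})\epsilon^{-1}\sqrt{\ln K}(\ln(KT)+K)\sqrt{T}\ln T\bigr)$.

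The part I expect to need the most bookkeeping is the extra exploration triggered whenever $M$ grows between consecutive epochs: at the start of $E_b$ the algorithm must refresh $M_b$ samples of each ergodic process $X_{ij}$, costing up to $(K-1)M_b$ rounds. Fortunately this cost already appears as the $(K-1)M$ summand of Theorem \ref{th:3} and is therefore absorbed into Corollary \ref{cor:2}'s per-epoch bound; the cumulative contribution $\sum_b KM_b$ is of order $K\alpha(\mathcal{G})\epsilon^{-1}\sqrt{\ln K}\sqrt{T}\log T$, exactly matching the $K\ln T\sqrt{T}$ term in \eqref{eq:34}. The remaining subtleties, which I expect to be the main technical obstacle, are (i) checking that $T_b$ is large enough to accommodate these exploration rounds within the epoch, which is where the hypothesis $T>K$ enters, and (ii) handling the boundary case $b=0$, either by merging it with $b=1$ or by absorbing the corresponding $\mathcal{O}(1)$ loss into the constants.
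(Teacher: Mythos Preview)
Your plan follows essentially the same route as the paper: partition time into dyadic epochs, bound the regret on each epoch, and sum via geometric series. The one substantive difference is the level at which you take the per-epoch bound. The paper does \emph{not} invoke Corollary~\ref{cor:2} as a black box; instead it re-applies Theorem~\ref{th:3} to each epoch, with the incremental exploration cost $(K-1)(M_b-M_{b-1})$ in place of $(K-1)M_b$. Summing over $b$ then telescopes the exploration contribution to $(K-1)M_B$, where $M_B$ is the final value of $M$, and the remaining terms ($\frac{\ln K}{\eta_b}$, the $(1-q_{i,t})^{M_b}$ contribution, and $\eta_b\sum_i\pi_{i,t}/q_{i,t}$ via Lemma~\ref{lem:3}) are bounded directly and summed geometrically.

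Working at the Theorem~\ref{th:3} level neutralizes precisely the obstacle you flag in~(i): since the exploration term is $(K-1)(M_b-M_{b-1})$, there is no need to check that $T_b\ge KM_b$, and no $T>K$ hypothesis is used anywhere (that condition belongs to Lemma~\ref{lem:5}, not Lemma~\ref{lem:6}; the paper's proof here starts at $b=0$). Your shortcut through Corollary~\ref{cor:2} is not wrong, but it is not literally applicable either, because within epoch $b$ the algorithm does \emph{not} ``run exactly as Exp3-GR would with known horizon $T_b$'': it performs only $K(M_b-M_{b-1})$ fresh exploration rounds, reusing the earlier samples for geometric resampling. To make your argument rigorous you would need to observe that this shorter exploration can only decrease the per-epoch regret relative to a fresh $KM_b$-round run, after which Corollary~\ref{cor:2} becomes a valid (slightly wasteful) upper bound; since $\sum_b M_b=\mathcal{O}(M_B)$, the waste is a constant factor and you recover the same final rate.
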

 	Comparing Lemma \ref{lem:5} with Lemma \ref{lem:6}, it can be observed that Exp3-GR achieves a tighter regret bound  \emph{with probability 1} when the number of experts $K$ is negligible in comparison with  horizon $T$. However, note that  choosing an appropriate  $M$ for Exp3-GR requires knowing  $\epsilon$ or a lower bound of $\epsilon$, which may not be feasible in general, while such information is not required for Exp3-UP in order to guarantee the regret bound in \eqref{eq:30}.
 	Furthermore, if the number of experts $K$ is large such that $K > \mathcal{O}(T^\frac{1}{3})$, Exp3-UP can achieve tighter regret bound compared with that of Exp3-GR. For example, if $K=\mathcal{O}(\sqrt{T})$, the regret of Exp3-UP is bounded from above by
 $
 	    \frac{\alpha(\mathcal{G})}{\epsilon}\sqrt{\ln^5(T)}T^\frac{11}{12} $,
 	which is tighter than the regret bound of Exp3-GR as it is
$
 	    \frac{\alpha(\mathcal{G})}{\epsilon}\sqrt{\ln^3(T)}T
$.
 	
\textbf{Comparison with \citet{kocak2016}.} 	Note that while Exp3-GR and Exp3-Res proposed in \citet{kocak2016} both employ the geometric resampling technique, there exist two major differences: i)  Exp3-Res assumes the actual feedback graph is generated from  Erd\"{o}s-R\'{e}nyi model, and the probabilities  of the presence of edges are equal across all edges, while Exp3-GR considers {the unequally probable case}; and ii) unlike Exp3-Res,  Exp3-GR does not assume that the learner is guaranteed to observe the loss associated with the chosen expert.
 	
 	\section{Experiments}

 Performance of the proposed algorithms Exp3-IP, Exp3-UP and Exp3-GR are compared with online learning algorithms Exp3 \citep{Auer2003}, Exp3-Res \citep{kocak2016} and Exp3-DOM \citep{Alon2017}.  Exp3 considers bandit setting, and Exp3-Res assumes Erd\"{o}s-R\'{e}nyi model for the feedback graph. Furthermore, Exp3-DOM treats the nominal feedback $\mathcal{G}_{t}$ as the actual one without considering uncertainties. Performance is tested  for regression task over several  real datasets obtained from the UCI Machine Learning Repository \citep{Dua2019}: \\
 	    \textbf{Air Quality}: This dataset contains $9,358$ instances of responses from  sensors located in a polluted area, each with $13$ features. The goal is to predict polluting chemical concentration in the air \citep{DeVito2008}.\\
 	    \textbf{CCPP}: The dataset has $9,568$  samples, with $4$ features including temperature, pressure, etc, collected from a combined cycle power plant. The goal is predicting hourly electrical energy output \citep{Tufekci2014}.\\
 	    \textbf{ Twitter:}  This dataset contains $14,000$ samples  with 77 features including e.g., the length of discussion on a given topic and the number of new interactive authors. The goal is to predict average number of active discussion on a certain topic \citep{kawala2013}.\\
 	    \textbf{ Tom's Hardware}: The dataset contains $10,000$ samples from a technology forum with $96$ features. The goal is to predict the average number of display about a certain topic on Tom’s hardware  \citep{kawala2013}.

 	        \begin{table}
      \caption{MSE and  standard deviation $(\times 10^{-3})$ on Air and CCPP datasets in equally probable setting.}
      \centering
      \label{table:2}
      \begin{tabular}{lll}
        \toprule
        \cmidrule(r){2-3}
                    &Air Quality   & CCPP    \\
        \midrule
        Exp3        & $8.70 \pm 0.26$    & $20.95 \pm 0.28$      \\
        Exp3-Res    & $11.23 \pm 0.37$    & $12.86 \pm 0.23$        \\
        Exp3-DOM    & $6.40 \pm 0.26$   & $13.76 \pm 0.34$       \\
        Exp3-IP     & $\textbf{4.13} \pm 0.27$    & $\textbf{7.27} \pm \textbf{0.13}$       \\
        Exp3-UP     & $4.63 \pm 0.37$    & $8.78 \pm 0.29$        \\
        Exp3-GR     & $4.71 \pm \textbf{0.20}$    & $8.41 \pm 0.14$   \\
        \bottomrule
      \end{tabular}
    \end{table}
    \begin{table}
      \caption{Performance on Twitter and Tom's Hardware datasets in equally probable setting.}
      \centering
      \label{table:3}
      \begin{tabular}{lll}
        \toprule
        \cmidrule(r){2-3}
                     &   Twitter      &   Tom's  \\
        \midrule
        Exp3        &     $7.84 \pm 0.29$      &       $5.74 \pm 0.43$ \\
        Exp3-Res    &     $10.01 \pm 0.40$     &       $6.07 \pm 0.51$  \\
        Exp3-DOM    &     $5.20 \pm 0.22$      &       $4.77 \pm 0.45$  \\
        Exp3-IP     &     $\textbf{4.19} \pm \textbf{0.18}$  &   $\textbf{3.12} \pm 0.36$  \\
        Exp3-UP     &     $4.47 \pm 0.20$      &       $3.83 \pm 0.42$  \\
        Exp3-GR     &     $4.64 \pm 0.26$      &       $3.51 \pm \textbf{0.35}$   \\
        \bottomrule
      \end{tabular}
    \end{table}

 	In all experiments,  $9$ experts are trained using $10\%$ of each dataset. Among them, $8$  are trained via  kernel ridge regression, with $5$ using RBF kernels with bandwidth of $10^{-2},10^{-1},1,10,100$, $3$ using Laplacian kernels with bandwidth $10^{-2},1,100$, and one expert is obtained via linear regression. The nominal graph $\mathcal{G}_{t}$ is fully connected.  Performance of algorithms are evaluated based on mean square error (MSE) over $20$ independent runs, which is defined as
 	\begin{align}
 	\text{MSE}: = \frac{1}{20} \sum_{n=1}^{20}{ \frac{1}{t}\sum_{\tau=1}^{t}{ (\hat{y}_{\tau,n}-y_{\tau})^2 } }  \label{eq:38}
 	\end{align}
 	where $\hat{y}_{\tau,n}$ and $y_{\tau}$ are the prediction of the chosen expert at $n$-th run and the true label of the datum at time $\tau$, respectively. The learning rate $\eta$ is set to $\frac{1}{\sqrt{t}}$ for all algorithms except for Exp3-Res which uses the suggested learning rate by \citet{kocak2016}. Parameter $M$ is set as $25$ for both Exp3-UP and Exp3-GR and $\xi=1$ for Exp3-UP. { All experiments were carried out using Intel(R) Core(TM) i7-10510U CPU @ 1.80 GHz 2.30 GHz processor with a 64-bit {Windows} operating system.}


     We first tested the equally probable setting where probabilities $p_{ij}=0.25$, $\forall i,j$. Table \ref{table:2} lists the MSE performance  along with standard deviation of MSE for Air Quality and CCPP datasets.  Table \ref{table:3} shows the MSE performance along with its standard deviation for Twitter and Tom's Hardware datasets. It can be observed that, knowing the exact probability enables Exp3-IP to achieve the best accuracy, and our novel Exp3-UP and Exp3-GR obtain lower MSE than Exp3. Moreover, note that in this case, the actual feedback graph is indeed generated from the Erd\"{o}s-R\'{e}nyi model. It turns out that Exp3-Res built upon this assumption obtains larger MSE compared to Exp3-UP and Exp3-GR. 
    
    \begin{figure}[ht]
     \centering
         \begin{subfigure}{.5\textwidth}
             \centering
             \includegraphics[width=1\linewidth]{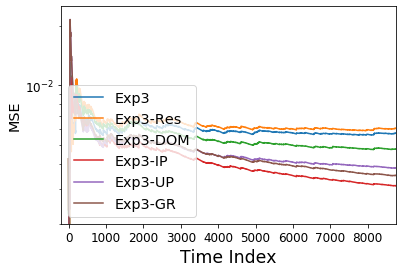}  
             \caption{Equally Probable Setting.}
             \label{fig:sub-first}
         \end{subfigure}
         \hfill
         \begin{subfigure}{.5\textwidth}
             \centering
             \includegraphics[width=1\linewidth]{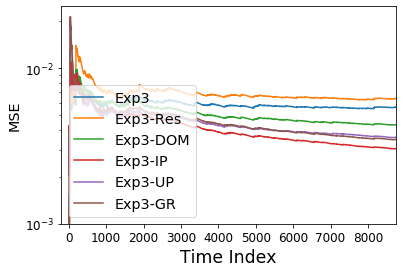}  
             \caption{Unequally Probable Setting.}
             \label{fig:sub-second}
         \end{subfigure}
         \caption{MSE performance on Tom's dataset.}
         \label{fig:1}
    \end{figure}

    %

\begin{table}
      \caption{Performance on Air Quality and CCPP datasets in the unequally probable setting}
      \centering
      \label{table:1}
      \begin{tabular}{lll}
        \toprule
        \cmidrule(r){2-3}
          &Air Quality  & CCPP \\
        \midrule
        Exp3        & $8.12 \pm 0.48$    & $20.49 \pm 0.23$  \\
        Exp3-Res    & $11.68 \pm 0.35$    & $10.12 \pm 0.24$  \\
        Exp3-DOM    & $5.60 \pm 0.36$   & $11.22 \pm 0.21$   \\
        Exp3-IP     & $\textbf{4.14} \pm \textbf{0.18}$    & $\textbf{7.19} \pm \textbf{0.18}$  \\
        Exp3-UP     & $\textbf{4.34} \pm 0.32$    & $8.08 \pm \textbf{0.14}$  \\
        Exp3-GR     & $4.68 \pm \textbf{0.17}$    & $8.42 \pm 0.23$   \\
        \bottomrule
      \end{tabular}
    \end{table}  
    \begin{table}
      \caption{Performance on Twitter and Tom's Hardware in the unequally probable setting}
      \centering
      \label{table:4}
      \begin{tabular}{lll}
        \toprule
        \cmidrule(r){2-3}
          &   Twitter      &   Tom's \\
        \midrule
        Exp3        &     $7.85 \pm 0.21$      &       $5.63 \pm 0.42$ \\
        Exp3-Res    &     $9.34 \pm 0.42$     &       $6.40 \pm 0.36$  \\
        Exp3-DOM    &     $5.64 \pm 0.21$      &       $4.33 \pm 0.34$  \\
        Exp3-IP     &     $\textbf{4.27} \pm 0.22$      &       $\textbf{3.04} \pm \textbf{0.26}$  \\
        Exp3-UP     &     $4.60 \pm 0.29$      &       $3.60 \pm 0.39$  \\
        Exp3-GR     &     $4.74 \pm \textbf{0.19}$      &       $3.48 \pm 0.32$   \\
        \bottomrule
      \end{tabular}
    \end{table}


 We further tested the unequally probable case, with $p_{ij}$ drawn from uniform distribution $\mathcal{U}[0.25,0.5]$. Tables \ref{table:1} and \ref{table:4} list the MSE of all algorithms along with standard deviation of MSE for Air Quality, CCPP, Twitter and Tom's Hardware datasets, respectively. It can be observed that Exp3-IP  obtains the best accuracy. This shows that knowing the  probabilities can indeed help  obtain better performance. Furthermore, it can be observed that Exp3-UP and Exp3-GR can achieve lower MSE in comparison with Exp3 which shows the effectiveness of using the information given by the uncertain graph. In addition, lower MSE of Exp3-UP and Exp3-GR compared to Exp3-DOM indicates that considering the uncertain graph $\mathcal{G}_t$ as a certain graph can degrade MSE. Moreover, it can be observed  Exp3-UP and Exp3-GR outperform Exp3-Res when the actual feedback graph is not generated by Erd\"{o}s-R\'{e}nyi model. It can be observed Exp3-IP achieves lower MSE than Exp3-GR and Exp3-UP, since  the learner has access to the probabilities, while Exp3-UP and Exp3-GR do not rely on such prior information. Figure \ref{fig:1} illustrates the MSE performance of algorithms on Tom's Hardware dataset over time. It can be readily observed that our prposed algorithms converge faster than Exp3-DOM and Exp3-Res which do not consider the uncertainty in the feedback graph. 
    
    
    \section{Conclusion}
    The present paper studied the problem of online learning with \emph{uncertain} feedback graphs, where potential uncertainties in the feedback graphs were modeled using probabilistic models.  Novel algorithms were developed to exploit information revealed by the nominal feedback graph 
    and different scenarios were discussed. 
    Specifically, in the informative case, where the probabilities associated with edges are also revealed, Exp3-IP was developed. 
    It is  proved that Exp3-IP can achieve sublinear regret bound. Furthermore, Exp3-UP and Exp3-GR were developed for the uninformative case.
  It is proved that Exp3-GR can achieve tighter sublinear regret bound than that of Exp3-UP when the number of experts is negligible compared to time horizon, while EXP3-UP requires less prior information than Exp3-GR. 
 Experiments on a number of real datasets were carried out to demonstrate that our novel algorithms can effectively address uncertainties in the feedback graph, and help enhance the learning ability of the learner.

	\bibliographystyle{plainnat}
	\bibliography{main}

\newpage
\onecolumn
\appendix

	\section{Proof of Theorem \ref{th:1}} \label{A:1}
    Recall that $W_t = \sum_{i=1}^{K}{w_{i,t}}$ (below \eqref{eq:5}), we have
	\begin{align}
	\frac{W_{t+1}}{W_t} = \sum_{i=1}^{K}{ \frac{w_{i,t+1}}{W_t} } = \sum_{i=1}^{K}{ \frac{w_{i,t}}{W_t}\exp\left(-\eta \hat{\ell}_{t}(v_i)\right) }. \label{eq:1ap}
	\end{align}
	According to \eqref{eq:5}, we can write
	\begin{align}
	\frac{w_{i,t}}{W_t} = \frac{\pi_{i,t}-\eta \bar{F}_{i,t}}{1-\eta} \label{eq:2ap}
	\end{align}
	where $\bar{F}_{i,t} = \frac{F_{i,t}}{\sum_{j \in \mathcal{D}_t}{F_{j,t}}}\mathcal{I}(v_i \in \mathcal{D}_t)$. Substituting \eqref{eq:2ap} into \eqref{eq:1ap} obtains
	\begin{align}
	\frac{W_{t+1}}{W_t} = \sum_{i=1}^{K}{ \frac{\pi_{i,t}-\eta \bar{F}_{i,t}}{1-\eta}\exp\left(-\eta \hat{\ell}_{t}(v_i)\right) }. \label{eq:3ap}
	\end{align} 
	Using the inequality $e^{-x} \le 1-x+\frac{1}{2}x^{2}, \forall x \ge 0$,  the following inequality holds
	\begin{align}
	\frac{W_{t+1}}{W_t} \le  \sum_{i=1}^{K}{ \frac{\pi_{i,t}-\eta \bar{F}_{i,t}}{1-\eta}\left(1 -\eta \hat{\ell}_{t}(v_i) + \frac{1}{2}(\eta \hat{\ell}_{t}(v_i))^2\right) }. \label{eq:4ap}
	\end{align}
	Taking  logarithm of both sides of  \eqref{eq:4ap} and using the fact that $1+x \le e^x$, we have
	\begin{align}
	\ln \frac{W_{t+1}}{W_t} \le \sum_{i=1}^{K}{ \frac{\pi_{i,t}-\eta \bar{F}_{i,t}}{1-\eta}\left( -\eta \hat{\ell}_{t}(v_i) + \frac{1}{2}(\eta \hat{\ell}_{t}(v_i))^2\right) }. \label{eq:5ap}
	\end{align}
	Summing \eqref{eq:5ap} over time obtains
	\begin{align}
	\ln \frac{W_{T+1}}{W_1} \le \sum_{t=1}^{T}{ \sum_{i=1}^{K}{ \frac{\pi_{i,t}-\eta \bar{F}_{i,t}}{1-\eta}\left( -\eta \hat{\ell}_{t}(v_i) + \frac{1}{2}(\eta \hat{\ell}_{t}(v_i))^2\right) } }. \label{eq:6ap}
	\end{align}
	Furthermore, the left hand side of \eqref{eq:5ap} can be bounded from below as
	\begin{align}
	\ln \frac{W_{T+1}}{W_1} \ge \ln \frac{w_{i,T+1}}{W_1} = - \eta \sum_{t=1}^{T}{ \hat{\ell}_{t}(v_i)} - \ln K \label{eq:7ap}
	\end{align}
	where the equality holds due to the fact that $W_1 = \sum_{j=1}^{K}{w_{j,1}} = K$.
Then, \eqref{eq:6ap} and \eqref{eq:7ap} lead to
	\begin{align}
	& \sum_{t=1}^{T}{ \sum_{i=1}^{K}{ \frac{\eta \pi_{i,t}}{(1-\eta)} \hat{\ell}_{t}(v_i) } } - \eta \sum_{t=1}^{T}{ \hat{\ell}_{t}(v_i)} \nonumber \\  \le  & \ln K + \sum_{t=1}^{T}{\sum_{i \in \mathcal{D}_t}{ \frac{\eta^2 \bar{F}_{i,t}}{(1-\eta)} \hat{\ell}_{t}(v_i) }} + \sum_{t=1}^{T}{\sum_{i=1}^{K}{ \eta^2 \frac{\pi_{i,t}-\eta \bar{F}_{i,t}}{2(1-\eta)} \hat{\ell}_{t}(v_i)^2 }}. \label{eq:8ap}
	\end{align}
	Multiplying both sides of \eqref{eq:8ap} by $\frac{(1- \eta)}{\eta}$
	\begin{align}
	& \sum_{t=1}^{T}{ \sum_{i=1}^{K}{ \pi_{i,t} \hat{\ell}_{t}(v_i) } } - \sum_{t=1}^{T}{ \hat{\ell}_{t}(v_i)} \nonumber \\ \le & \frac{\ln K}{\eta} + \sum_{t=1}^{T}{\sum_{i \in \mathcal{D}_t}{ \eta \bar{F}_{i,t} \hat{\ell}_{t}(v_i) }} + \sum_{t=1}^{T}{\sum_{i=1}^{K}{ \frac{\eta}{2} (\pi_{i,t}-\eta \bar{F}_{i,t}) \hat{\ell}_{t}(v_i)^2 }}. \label{eq:9ap}
	\end{align}
 Furthermore, the expected values of $\hat{\ell}_{t}(v_i)$ and $\hat{\ell}_{t}(v_i)^2$  can be written as
	\begin{subequations} \label{eq:10ap}
		\begin{align} 
		\mathbb{E}_{t}[{\hat{\ell}_{t}(v_i)}] & = \sum_{j=1}^{K}{ \pi_{j,t}p_{ji,t} \frac{\ell_{t}(v_i)}{q_{i,t}} } = \ell_{t}(v_i) \label{eq:10apa} \\
		\mathbb{E}_{t}[{\hat{\ell}_{t}(v_i)}^2] & = \sum_{j=1}^{K}{ \pi_{j,t}p_{ji,t} \frac{\ell_{t}(v_i)^2}{q_{i,t}^2} } = \frac{\ell_{t}(v_i)^2}{q_{i,t}} \le \frac{1}{q_{i,t}} \label{eq:10apb}
		\end{align}
	\end{subequations}
	where the inequality in \eqref{eq:10apb} holds because of (a1) which implies $\ell_{t}(v_i) \le 1$. Taking the expectation of both sides of \eqref{eq:9ap}, we arrive at
	\begin{align}
	& \sum_{t=1}^{T}{ \sum_{i=1}^{K}{ \pi_{i,t} \ell_{t}(v_i) } } - \sum_{t=1}^{T}{ \ell_{t}(v_i)} \nonumber \\ \le & \frac{\ln K}{\eta} + \sum_{t=1}^{T}{\sum_{i=1}^{K}{ \eta \bar{F}_{i,t} \ell_{t}(v_i) }} + \sum_{t=1}^{T}{\sum_{i=1}^{K}{ \frac{\eta}{2} (\pi_{i,t}-\eta \bar{F}_{i,t}) \frac{1}{q_{i,t}} }}. \label{eq:11ap}
	\end{align}
	Moreover, using the fact that $q_{i,t} \le 1$ we have
	\begin{align}
	\frac{\eta^2}{2}\sum_{t=1}^{T}{\sum_{i=1}^{K}{\frac{\bar{F}_{i,t}}{q_{i,t}}}} \ge \frac{\eta^2}{2}\sum_{t=1}^{T}{\sum_{i=1}^{K}{ \bar{F}_{i,t} }} = \frac{\eta^2}{2}\sum_{t=1}^{T}{ 1 } = \frac{\eta^2 T}{2}. \label{eq:12ap}
	\end{align}
	Furthermore, since based on (a1) $\ell_{t}(v_i) \le 1$, the second term on the RHS of \eqref{eq:11ap} can be bounded by
	\begin{align}
	\eta \sum_{t=1}^{T}{ \sum_{i=1}^{K}{ \bar{F}_{i,t} \ell_{t}(v_i) } } \le \eta \sum_{t=1}^{T}{ \sum_{i=1}^{K}{ \bar{F}_{i,t} } } = \eta \sum_{t=1}^{T}{1} = \eta T. \label{eq:13ap}
	\end{align}
	Combining \eqref{eq:12ap}, \eqref{eq:13ap} with \eqref{eq:11ap} we have
	\begin{align}
	\sum_{t=1}^{T}{ \sum_{i=1}^{K}{ \pi_{i,t} \ell_{t}(v_i) } } - \sum_{t=1}^{T}{ \ell_{t}(v_i)} \le \frac{\ln K}{\eta} + \eta T - \frac{\eta^2 T}{2} + \frac{\eta}{2} \sum_{t=1}^{T}{ \sum_{i=1}^{K}{\frac{\pi_{i,t}}{q_{i,t}}} }. \label{eq:14ap}
	\end{align}
	By definition, the first term on the RHS of \eqref{eq:14ap} equals to $\mathbb{E}_t[\ell_t(v_{I_t})]$. In addition, note that  \eqref{eq:14ap} holds for all $v_i \in \mathcal{V}$, hence the following inequality holds
	\begin{align}
	\sum_{t=1}^{T}{\mathbb{E}_t[\ell_t(v_{I_t})]}-\min_{v_i \in \mathcal{V}}{ \sum_{t=1}^{T}{\ell_t(v_i)} } \le \frac{\ln K}{\eta} + \eta (1 - \frac{\eta}{2})T + \frac{\eta}{2} \sum_{t=1}^{T}{ \sum_{i=1}^{K}{\frac{\pi_{i,t}}{q_{i,t}}} } \label{eq:15ap}
	\end{align}
	which completes the proof of Theorem \ref{th:1}.
	
	\section{Proof of Lemma \ref{lem:4}} \label{B}
	Based on Theorem \ref{th:1}, the upper bound of the expected regret of Exp3-IP is
	\begin{align}
	\sum_{t=1}^{T}{\mathbb{E}_t[\ell_t(v_{I_t})]}-\min_{v_i \in \mathcal{V}}{ \sum_{t=1}^{T}{\ell_t(v_i)} } \le \frac{\ln K}{\eta} + \eta (1 - \frac{\eta}{2})T + \frac{\eta}{2} \sum_{t=1}^{T}{ \sum_{i=1}^{K}{\frac{\pi_{i,t}}{q_{i,t}}} }. \label{eq:16ap}
	\end{align}
	Let at each time instant $t$, $Q_{t}$ is defined as
	\begin{align}
	    Q_{t} = 1 + \frac{1}{2}\sum_{i=1}^{K}{\frac{\pi_{i,t}}{q_{i,t}}}. \label{eq:127ap}
	\end{align}
	Furthermore, let $\tau_r$ represent the greatest time instant such that $\sum_{t=1}^{\tau_r}{Q_t} \le 2^r$. According to the doubling trick at time instant $\tau_{r-1}+1$ where $\sum_{t=1}^{\tau_{r-1}+1}{Q_{t}} > 2^{r-1}$, the algorithm restarts with
	\begin{align}
	    \eta_r = \sqrt{\frac{\ln K}{2^r}}. \label{eq:128ap}
	\end{align}
	Also, the algorithm starts with $r=0$. Therefore, based on \eqref{eq:16ap} and \eqref{eq:128ap}, it can be concluded that
	\begin{align}
	    \sum_{t=1}^{\tau_r}{\pi_{i,t}\ell_t(v_i)} - \min_{v_i \in \mathcal{V}}{ \sum_{t=1}^{\tau_r}{\ell_t(v_i)} } \le 2 \sqrt{2^r \ln K} - \frac{\ln K}{2^{r+1}} \tau_r \label{eq:129ap}
	\end{align}
	when $2^{r-1} < \sum_{t=1}^{\tau_r}{Q_t} \le 2^r$. The maximum number of restarts required in this case is $\left \lceil{\log_2 \sum_{t=1}^{T}{Q_{t}}} \right \rceil$. Moreover, it can be written that
	\begin{align}
	    \sum_{r=0}^{\left \lceil{\log_2 \sum_{t=1}^{T}{Q_{t}}} \right \rceil}{2 \sqrt{2^r \ln K}} < \frac{4\sqrt{ \ln K}}{\sqrt{2}-1}\sqrt{\sum_{t=1}^{T}{Q_{t}}}. \label{eq:130ap}
	\end{align}
	Therefore, based on \eqref{eq:16ap} and considering the fact that the maximum possible value for incurred loss at each restart is $1$, combining \eqref{eq:129ap} with \eqref{eq:130ap} leads to
	\begin{align}
	    & \sum_{t=1}^{T}{\mathbb{E}_t[\ell_t(v_{I_t})]}-\min_{v_i \in \mathcal{V}}{ \sum_{t=1}^{T}{\ell_t(v_i)} } \nonumber \\ \le & {\mathcal{O}\left(\sqrt{(\ln K)\sum_{t=1}^{T}{Q_{t}}} + \left \lceil{\log_2 \sum_{t=1}^{T}{Q_{t}}} \right \rceil\right)} \nonumber \\ =&  {\mathcal{O}\left(\sqrt{\ln K\sum_{t=1}^{T}{(1+\frac{1}{2}\sum_{i=1}^{K}{\frac{\pi_{i,t}}{q_{i,t}}})}} + \left \lceil{\log_2 \sum_{t=1}^{T}{Q_{t}}} \right \rceil\right)} \label{eq:131ap}
	\end{align}
	Based on (a2), we can write $p_{ij} \ge \epsilon > 0$ if $(i,j) \in \mathcal{E}_t$. According to \eqref{eq:6} and the fact that the $i$-th expert is chosen by the learner with probability of $\pi_{i,t}$, based on (a2) the inequality $q_{i,t} \ge \pi_{i,t}\epsilon$ holds. Thus, we have
	\begin{align}
	{\left \lceil{\log_2 \sum_{t=1}^{T}{Q_{t}}} \right \rceil} = \mathcal{O}\left(\ln (\frac{K}{\epsilon}T)\right). \label{eq:17ap}
	\end{align}
    Combining \eqref{eq:131ap} with \eqref{eq:17ap} obtains
	\begin{align}
	    & \sum_{t=1}^{T}{\mathbb{E}_t[\ell_t(v_{I_t})]}-\min_{v_i \in \mathcal{V}}{ \sum_{t=1}^{T}{\ell_t(v_i)} } \nonumber \\ \le & \mathcal{O}\left(\sqrt{\ln K\sum_{t=1}^{T}{(1+\frac{1}{2}\sum_{i=1}^{K}{\frac{\pi_{i,t}}{q_{i,t}}})}} + \ln (\frac{K}{\epsilon}T)\right) \label{eq:133ap}
	\end{align}
	In order to move forward, the following Lemma is exploited \citep{Alon2017}.
	\begin{lemma} \label{lem:3}
	Let $\mathcal{G} = (\mathcal{V},\mathcal{E})$ be a directed graph with a set of vertices $\mathcal{V}$ and a set of edges $\mathcal{E}$. Let $\mathcal{D} \subseteq \mathcal{V}$ be a dominating set for $\mathcal{G}$ and $p_1,\ldots,p_K$ be a probability distribution defined over $\mathcal{V}$, such that $p_i \ge \beta > 0$, for $i \in \mathcal{D}$. Then
	\begin{align}
	    \sum_{i=1}^{K}{\frac{p_i}{\sum_{j:j \rightarrow i}{p_j}}} \le 2 \alpha(\mathcal{G}) \ln(1+\frac{\left \lceil{\frac{K^2}{\beta |\mathcal{D}|}} \right \rceil +K}{\alpha(\mathcal{G})}) + 2|\mathcal{D}| \label{eq:135ap}
	\end{align}
	where $\alpha(\mathcal{G})$ represents independence number for the graph $\mathcal{G}$.
	\end{lemma}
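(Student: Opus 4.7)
}
The goal is to bound $S = \sum_{i=1}^K p_i / Q_i$ where $Q_i := \sum_{j: j \to i} p_j$. The approach I would take is to split the sum between vertices in the dominating set $\mathcal{D}$ and those outside it, and then handle the latter via a geometric layering of the in-flow magnitudes combined with an independence-number argument. The key structural inputs are: (i) the domination property, which guarantees $Q_i \ge p_j \ge \beta$ for every $i \notin \mathcal{D}$ through its dominator $j \in \mathcal{D}$; (ii) the assumption (in the feedback-graph setting this lemma is applied to) that every vertex has a self-loop, which yields $Q_i \ge p_i$ for $i \in \mathcal{D}$; and (iii) the total in-flow identity $\sum_i Q_i = \sum_j p_j |\mathcal{N}_j^{\text{out}}| \le K$.

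For the easy part, vertices $i \in \mathcal{D}$ satisfy $p_i/Q_i \le 1$ by the self-loop observation, contributing at most $|\mathcal{D}|$ to $S$; the factor of $2$ in $2|\mathcal{D}|$ of the final bound absorbs the constants that appear when we reconcile this piece with the second part. For the harder part, I would partition $\mathcal{V} \setminus \mathcal{D}$ into geometric layers $\mathcal{L}_k := \{i \notin \mathcal{D}: Q_i \in [2^{k-1}\beta,\, 2^{k}\beta)\}$ for $k = 0,1,2,\ldots$ Within a single layer, $\sum_{i \in \mathcal{L}_k} p_i / Q_i \le 2 \, |\mathcal{L}_k| / (2^{k} \beta) \cdot \max_{i \in \mathcal{L}_k} p_i$, and more usefully, $\sum_{i \in \mathcal{L}_k} p_i / Q_i \le |\mathcal{L}_k|$ after using the per-vertex bound $p_i \le Q_i$ in the other direction; the interesting estimate will come from bounding $|\mathcal{L}_k|$ itself.

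The combinatorial heart of the argument, and what I expect to be the main obstacle, is bounding $|\mathcal{L}_k| = O(\alpha(\mathcal{G}))$ via a greedy construction of an independent set: repeatedly pick $i \in \mathcal{L}_k$ minimizing $Q_i$, add it to the independent set, and remove $i$ together with all of its in-neighbors; because $Q_i \ge 2^{k-1}\beta$ but the cumulative $p$-mass used in one step is controlled by $2^{k}\beta$, each step can eliminate only $O(1)$ vertices of $\mathcal{L}_k$, forcing the greedy procedure to produce $\Omega(|\mathcal{L}_k|)$ independent vertices and therefore $|\mathcal{L}_k| \le 2\alpha(\mathcal{G})$. Counting the active layers: since $Q_i \le \sum_j p_j \le 1$ and $\sum_i Q_i \le K$, only layers with index $k \le \log_2\bigl(1 + \lceil K^2/(\beta|\mathcal{D}|)\rceil/\alpha(\mathcal{G})\bigr) + O(\log K/\alpha(\mathcal{G}))$ can carry nontrivial mass; summing $2\alpha(\mathcal{G})$ across these layers produces the logarithmic term $2\alpha(\mathcal{G})\ln\!\bigl(1 + \tfrac{\lceil K^2/(\beta|\mathcal{D}|)\rceil + K}{\alpha(\mathcal{G})}\bigr)$. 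Combining with the $|\mathcal{D}|$-piece from $i \in \mathcal{D}$ and allowing the factor $2$ yields exactly \eqref{eq:135ap}. The delicate point throughout is keeping the constants clean in the greedy removal step so that the independence-number bound is attained with the stated factor; everything else is bookkeeping on the geometric layers.
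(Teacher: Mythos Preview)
First, a clarification: the paper does not actually prove Lemma~\ref{lem:3}; it is quoted from \citet{Alon2017} and invoked as a black box in the regret analyses. So there is no proof in the present paper to compare against, and your proposal has to stand on its own.

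The proposal has a genuine gap at its combinatorial core. You claim that each geometric layer $\mathcal{L}_k$ satisfies $|\mathcal{L}_k| \le 2\alpha(\mathcal{G})$, but this is false in general. Take the complete directed graph on $K$ vertices with all self-loops and the uniform distribution $p_i = 1/K$; then $Q_i = 1$ for every $i$, so all $K$ vertices lie in a single layer, while $\alpha(\mathcal{G}) = 1$. Your greedy justification collapses here as well: picking any $i$ and removing its in-neighborhood deletes every vertex in one step, so only one vertex is ever selected. The assertion that ``each step can eliminate only $O(1)$ vertices of $\mathcal{L}_k$'' silently assumes a lower bound on the masses $p_j$ of the removed in-neighbors, but layer membership constrains $Q_j = \sum_{j' \to j} p_{j'}$, not $p_j$ itself, so no such lower bound is available. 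A second, independent problem is that deleting only in-neighbors does not yield an independent set: if $i$ is selected and $i \to j$ (but not $j \to i$), then $j$ survives and can be selected later, so the selected set may contain edges and its size is not bounded by $\alpha(\mathcal{G})$ in the first place. Finally, your layer count does not match the statement: since $\beta \le Q_i \le 1$, the number of nonempty dyadic layers is $O(\log(1/\beta))$, which is not the quantity $\ln\bigl(1 + (\lceil K^2/(\beta|\mathcal{D}|)\rceil + K)/\alpha(\mathcal{G})\bigr)$ appearing in \eqref{eq:135ap}.

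For orientation, the argument in \citet{Alon2017} does not proceed via $Q$-layering. One first discretizes the weights: choose an integer $M$ (of the order that produces the $\lceil K^2/(\beta|\mathcal{D}|)\rceil$ in the statement), replace each $p_i$ by $c_i/M$ with $c_i \in \mathbb{Z}_{\ge 0}$, and blow up vertex $i$ into $c_i$ clones. This reduces the weighted sum $\sum_i p_i/Q_i$ (up to an additive error absorbed by the $2|\mathcal{D}|$ term) to an unweighted sum $\sum_{v} 1/d_v^{-}$ over an auxiliary graph on $N \le M + K$ vertices whose independence number equals $\alpha(\mathcal{G})$, since clones of a self-looped vertex are pairwise adjacent. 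The bound $\sum_v 1/d_v^{-} \le 2\alpha \ln(1 + N/\alpha)$ is then obtained by a separate peeling argument in which the independence number controls the \emph{global} number of peeling rounds, not the size of any fixed level set of $Q$. In particular, no step in that proof asserts that few vertices can share a common in-flow magnitude, which is precisely the claim that fails above.
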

	Based on Lemma \ref{lem:3} and (a2), we get
	\begin{align}
	    \sum_{i=1}^{K}{\frac{\pi_{i,t}}{\sum_{\forall j: j \in \mathcal{N}_{i,t}^{\text{in}}}{\pi_{j,t}}}} < 2 \alpha(\mathcal{G}_t) \ln(1+\frac{\left \lceil{\frac{K^3}{\eta \epsilon}} \right \rceil +K}{\alpha(\mathcal{G}_t)}) + 2|\mathcal{D}_t|. \label{eq:136ap}
	\end{align}
	Considering the fact that $q_{i,t} \ge \epsilon \sum_{\forall j: j \in \mathcal{N}_{i,t}^{\text{in}}}{\pi_{j,t}}$ which is induced by (a2), from \eqref{eq:136ap}, it can be inferred that
	\begin{align}
	    \sum_{i=1}^{K}{\frac{\pi_{i,t}}{q_{i,t}}} < \frac{2 \alpha(\mathcal{G}_t)}{\epsilon} \ln(1+\frac{\left \lceil{\frac{K^3}{\eta \epsilon}} \right \rceil +K}{\alpha(\mathcal{G}_t)}) + \frac{2|\mathcal{D}_t|}{\epsilon}. \label{eq:137ap}
	\end{align}
	Furthermore, if greedy set cover algorithm by \citet{Chvatal1979} is employed to obtain the dominating set $|\mathcal{D}_t|$, it can be written that \citep{Alon2017}
	\begin{align}
	    |\mathcal{D}_t|=\mathcal{O}(\alpha(\mathcal{G}_t)\ln K). \label{eq:134ap}
	\end{align}
	Therefore, from \eqref{eq:137ap} we can conclude that
	\begin{align}
	    \sum_{i=1}^{K}{\frac{\pi_{i,t}}{q_{i,t}}} \le \mathcal{O}\left(\frac{\alpha(\mathcal{G}_t)}{\epsilon}\ln(\frac{KT}{\epsilon})\right) \label{eq:138ap}
	\end{align}
	Combining \eqref{eq:133ap} with \eqref{eq:134ap} and \eqref{eq:138ap}, we arrive at
	\begin{align}
	    & \sum_{t=1}^{T}{\mathbb{E}_t[\ell_t(v_{I_t})]}-\min_{v_i \in \mathcal{V}}{ \sum_{t=1}^{T}{\ell_t(v_i)} } \nonumber \\ \le & \mathcal{O}\left( \sqrt{\ln K\ln(\frac{K}{\epsilon}T)\sum_{t=1}^{T}{\frac{\alpha(\mathcal{G}_t)}{\epsilon}}} + \ln (\frac{K}{\epsilon}T)\right) \label{eq:139ap}
	\end{align}
	which completes the proof of Lemma \ref{lem:4}.

	\section{Proof of Theorem \ref{th:2}} \label{E}   
	
	In order to prove Theorem \ref{th:2}, let's first consider when $t \le KM$, during which the learner chooses among experts in a deterministic fashion. The (expected) loss incurred can henceforth  be written as
	\begin{align}
	    \mathbb{E}_{t}[\ell_t(v_i)] = \ell_t(v_k). \label{eq:71ap}
	\end{align}
	Since $\ell_t(v_i) \le 1$, we have
	\begin{align}
	    \sum_{t=1}^{KM}{ \mathbb{E}_{t}[\ell_t(v_i)] } - \sum_{t=1}^{KM}{ \ell_t(v_i) } \le (K-1)M. \label{eq:37ap}
	\end{align}
	On the other hand, for any $t$ we have
	\begin{align}
		\frac{W_{t+1}}{W_t} = \sum_{i=1}^{K}{ \frac{w_{i,t+1}}{W_t} } = \sum_{i=1}^{K}{ \frac{w_{i,t}}{W_t}\exp\left(-\eta \tilde{\ell}_{t}(v_i)\right) }. \label{eq:54ap}
	\end{align}
	Recall \eqref{eq:10}, we have
	\begin{align}
		\frac{w_{i,t}}{W_t} = \frac{\pi_{i,t}-\eta \hat{\bar{F}}_{i,t}}{1-\eta} \label{eq:55ap}
	\end{align}
	where $\hat{\bar{F}}_{i,t} = \frac{\eta}{|\mathcal{D}|}\mathcal{I}(v_i \in \mathcal{D}_t)$. Combining \eqref{eq:54ap} with \eqref{eq:55ap} leads to
	\begin{align}
		\frac{W_{t+1}}{W_t} = \sum_{i=1}^{K}{ \frac{\pi_{i,t}-\eta \hat{\bar{F}}_{i,t}}{1-\eta}\exp\left(-\eta \tilde{\ell}_{t}(v_i)\right) }. \label{eq:56ap}
	\end{align} 
	Due to the fact $e^{-x} \le 1-x+\frac{1}{2}x^{2}, \forall x \ge 0$, the following inequality holds
	\begin{align}
		\frac{W_{t+1}}{W_t} \le  \sum_{i=1}^{K}{  \frac{\pi_{i,t}-\eta \hat{\bar{F}}_{i,t}}{1-\eta}\left(1 -\eta \tilde{\ell}_{t}(v_i) + \frac{1}{2}(\eta \tilde{\ell}_{t}(v_i))^2\right) }. \label{eq:57ap}
	\end{align}
	Taking logarithm and using the fact that $1+x \le e^x$, we obtain
	\begin{align}
		\ln \frac{W_{t+1}}{W_t} \le \sum_{i=1}^{K}{  \frac{\pi_{i,t}-\eta \hat{\bar{F}}_{i,t}}{1-\eta}\left( -\eta \tilde{\ell}_{t}(v_i) + \frac{1}{2}(\eta \tilde{\ell}_{t}(v_i))^2\right) }. \label{eq:58ap}
	\end{align}
    Telescoping \eqref{eq:58ap} from $t^\prime := KM+1$ to $T$ achieves
	\begin{align}
		\ln \frac{W_{T+1}}{W_{t^\prime}} \le \sum_{t=t^\prime}^{T}{ \sum_{i=1}^{K}{ \frac{\pi_{i,t}-\eta \hat{\bar{F}}_{i,t}}{1-\eta}\left( -\eta \tilde{\ell}_{t}(v_i) + \frac{1}{2}(\eta \tilde{\ell}_{t}(v_i))^2\right) } }. \label{eq:59ap}
	\end{align}
	Moreover, note that $\ln \frac{W_{T+1}}{W_{t^\prime}}$ can be bounded by
	\begin{align}
		\ln \frac{W_{T+1}}{W_{t^\prime}} \ge \ln \frac{w_{i,T+1}}{W_1} = -\eta \sum_{\forall t: t \notin \mathcal{M}}{ \tilde{\ell}_{t}(v_i)} - \ln K. \label{eq:60ap}
	\end{align}
	Combining \eqref{eq:59ap} with \eqref{eq:60ap} obtains
	\begin{align}
		& \sum_{t=t^\prime}^{T}{ \sum_{i=1}^{K}{ \frac{\eta \pi_{i,t}}{1-\eta} \tilde{\ell}_{t}(v_i) } } -\eta \sum_{t=t^\prime}^{T}{ \tilde{\ell}_{t}(v_i)} \nonumber \\  \le & \ln K + \sum_{t=t^\prime}^{T}{\sum_{i=1}^{K}{ \frac{\eta^2 \hat{\bar{F}}_{i,t}}{1-\eta} \tilde{\ell}_{t}(v_i) }} + \sum_{t=t^\prime}^{T}{\sum_{i=1}^{K}{ \eta^2 \frac{\pi_{i,t}-\eta \hat{\bar{F}}_{i,t}}{2(1-\eta)} \tilde{\ell}_{t}(v_i)^2 }}. \label{eq:61ap}
	\end{align}
	Multiplying both sides of \eqref{eq:61ap} by $\frac{1- \eta }{\eta}$ arrives at
	\begin{align}
	& \sum_{t=t^\prime}^{T}{ \sum_{i=1}^{K}{ \pi_{i,t} \tilde{\ell}_{t}(v_i) } } - \sum_{t=t^\prime}^{T}{ \tilde{\ell}_{t}(v_i)} \nonumber \\  \le & \frac{\ln K}{\eta} + \sum_{t=t^\prime}^{T}{\sum_{i=1}^{K}{ \eta \hat{\bar{F}}_{i,t} \tilde{\ell}_{t}(v_i) }} + \sum_{t=t^\prime}^{T}{\sum_{i=1}^{K}{ \frac{\eta}{2} (\pi_{i,t}-\eta \hat{\bar{F}}_{i,t}) \tilde{\ell}_{t}(v_i)^2 }}. \label{eq:62ap}
	\end{align} 
	
	In addition, the expected value of $\tilde{\ell}_{t}(v_i)$ and $\tilde{\ell}_{t}(v_i)^2$ at time instant $t$ can be written as
	\begin{subequations} \label{eq:63ap}
		\begin{align} 
		\mathbb{E}_{t}[{\tilde{\ell}_{t}(v_i)}] & = \sum_{\forall j: v_j \in \mathcal{N}_{i,t}^{\text{in}}}{ \pi_{j,t}p_{ji} \frac{1}{\hat{q}_{i,t}}\ell_{t}(v_i) } = \frac{q_{i,t}}{\hat{q}_{i,t}}\ell_{t}(v_i) \label{eq:63apa} \\
		\mathbb{E}_{t}[{\tilde{\ell}_{t}(v_i)}^2] & = \sum_{\forall j: v_j \in \mathcal{N}_{i,t}^{\text{in}}}{ \pi_{j,t}p_{ji} \frac{1}{\hat{q}_{i,t}^2}\ell_{t}(v_i)^2 } = \frac{q_{i,t}}{\hat{q}_{i,t}^2}\ell_{t}(v_i)^2 \le \frac{q_{i,t}}{\hat{q}_{i,t}^2}. \label{eq:63apb}
		\end{align}
	\end{subequations}

Let $e_{ij,t}:=|\hat{p}_{ij,t}-p_{ij}|$.  
	According to \eqref{eq:11}, the probability that $\hat{q}_{i,t} \ge q_{i,t}$ is at least $\prod_{\forall j: v_j \in \mathcal{N}_{i,t}^{\text{in}}}\text{Pr}(e_{ij,t}\le \xi/\sqrt{M})$ since the incidents $\{e_{ij,t}\le \xi/\sqrt{M}$, $\forall (i,j) \in \mathcal{E}\}$ are independent from each other. Let $\varepsilon$ denote $\xi/\sqrt{M}$
	and $\mu_{i,t}:=\frac{1}{\hat{q}_{i,t}} - \frac{1}{q_{i,t}} $, we have
	\begin{align}
	\mu_{i,t} =  \frac{q_{i,t}-\hat{q}_{i,t}}{\hat{q}_{i,t}q_{i,t}} =\frac{\sum_{\forall j: v_j \in \mathcal{N}_{i,t}^{\text{in}}}{\pi_{j,t}(p_{ji} - \hat{p}_{ji,t} - \varepsilon)}}{\hat{q}_{i,t}q_{i,t}} \ge  -\frac{\sum_{\forall j: v_j \in \mathcal{N}_{i,t}^{\text{in}}}{2\pi_{j,t}\varepsilon}}{q_{i,t}^2} \label{eq:38ap}
	\end{align} 
where the last inequality holds	with probability  $\prod_{\forall j: v_j \in \mathcal{N}_{i,t}^{\text{in}}}\text{Pr}(e_{ij,t}\le \varepsilon)$. Therefore, the following inequalities hold with the probability  $\prod_{\forall j: v_j \in \mathcal{N}_{i,t}^{\text{in}}}\text{Pr}(e_{ij,t}\le \varepsilon)$
	\begin{subequations} \label{eq:70ap}
	    \begin{align}
	 & \ell_{t}(v_i) - \sum_{\forall j: v_j \in \mathcal{N}_{i,t}^{\text{in}}}{\frac{2\pi_{j,t}\varepsilon}{q_{i,t}} \ell_{t}(v_i)} \nonumber \\ &\leq     \mathbb{E}_{t}[{\tilde{\ell}_{t}(v_i)}] = \ell_{t}(v_i) + q_{i,t}\mu_{i,t}\ell_{t}(v_i) \le \ell_t(v_i)  \label{eq:70apa} \\
	        & \mathbb{E}_{t}[{\tilde{\ell}_{t}(v_i)}^2] \le \frac{1}{q_{i,t}}. \label{eq:70apc}
	    \end{align}
	\end{subequations}
	Taking expectation of both sides of \eqref{eq:62ap} and combining with \eqref{eq:70ap}, we obtain the following inequality 
	\begin{align}
	& \sum_{t = t^\prime}^{T}{ \sum_{i=1}^{K}{ \pi_{i,t} \ell_{t}(v_i) } } - \sum_{t = t^\prime}^{T}{ \sum_{i=1}^{K}{ \pi_{i,t}\sum_{\forall j: v_j \in \mathcal{N}_{i,t}^{\text{in}}}{\frac{2\pi_{j,t}\varepsilon}{q_{i,t}} \ell_{t}(v_i)} } } - \sum_{t = t^\prime}^{T}{ \ell_{t}(v_i)} \nonumber \\
	\le & \frac{\ln K}{\eta} + \sum_{t = t^\prime}^{T}{\sum_{i=1}^{K}{ \eta \hat{\bar{F}}_{i,t} \ell_{t}(v_i) }} + \sum_{t = t^\prime}{\sum_{i=1}^{K}{ \frac{\eta}{2} (\pi_{i,t}-\eta \hat{\bar{F}}_{i,t}) \frac{1}{q_{i,t}} }} \nonumber \\
	\le & \frac{\ln K}{\eta} + \sum_{t = t^\prime}^{T}{\sum_{i=1}^{K}{ \eta \hat{\bar{F}}_{i,t} }} + \sum_{t = t^\prime}^{T}{\sum_{i=1}^{K}{ \frac{\eta}{2} (\pi_{i,t}-\eta \hat{\bar{F}}_{i,t}) \frac{1}{q_{i,t}} }} \label{eq:64ap}
	\end{align} 
	which holds with probability at least $\prod_{(i,j)\in \mathcal{E}_{t}}{\Pr(e_{ij,t^\prime} \le \varepsilon, \ldots, e_{ij,T} \le \varepsilon)}$.
	 Applying the chain rule for one term in the product, we have
	\begin{align}
	    & \Pr(e_{ij,t^\prime} \le \varepsilon, \ldots, e_{ij,T} \le \varepsilon) \nonumber \\  =& \Pr(e_{ij,t^\prime} \le \varepsilon)\prod_{t=t^\prime+1}^{T}{\Pr(e_{ij,t} \le \varepsilon \mid e_{ij,t-1} \le \varepsilon,\ldots,e_{ij,t^\prime} \le \varepsilon)}  \nonumber \\ \ge & \prod_{t=t^\prime}^{T}{ \Pr(e_{ij,t} \le \varepsilon) }. \label{eq:72ap} 
	\end{align}
	In order to obtain the lower bound of the probability $\Pr(e_{ij,t} \le \varepsilon_{ij,t})$, the Bernstein inequality is employed. To this end consider the following lemma \citep{Yurinski1976}.
	\begin{lemma} \label{lem:1}
	Let $\zeta_1,\ldots,\zeta_n$ be independent random variables such that
	\begin{subequations} \label{eq:108ap}
	    \begin{align}
	        \mathbb{E}[\zeta_i] & = 0, \forall i: 1 \le i \le n \label{eq:108apa} \\
	        |\mathbb{E}[\zeta_i^m]| & \le \frac{m!}{2}b_i^2 H^{m-2}, m=2,3,\ldots, \forall i: 1 \le i \le n. \label{eq:108apb}
	    \end{align}
	 \end{subequations}
	 Then for $x\ge 0$, we have
	 \begin{align}
	     \Pr(|\zeta_1+\ldots+\zeta_n| \ge xB_n) \le 2 \exp(-\frac{\frac{x^2}{2}}{1+\frac{xH}{B_n}}) \label{eq:110ap}
	 \end{align}
	 where $B_n^2 = b_1^2+\ldots+b_n^2$.
	\end{lemma}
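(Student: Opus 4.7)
The plan is to prove this Bernstein-type inequality via the standard Chernoff / exponential-Markov approach. Let $S_n := \zeta_1 + \cdots + \zeta_n$. For any $\lambda > 0$, applying Markov's inequality to $e^{\lambda S_n}$ and using independence gives
\begin{equation}
\Pr(S_n \ge x B_n) \le e^{-\lambda x B_n} \prod_{i=1}^{n}{\mathbb{E}[e^{\lambda \zeta_i}]}.
\end{equation}
The task then reduces to controlling each moment generating function $\mathbb{E}[e^{\lambda \zeta_i}]$, optimizing $\lambda$, and handling the lower tail to obtain the factor of two in the final two-sided statement.

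The key step is to bound $\mathbb{E}[e^{\lambda \zeta_i}]$ using the moment hypothesis. Expanding the exponential, invoking $\mathbb{E}[\zeta_i] = 0$, and applying $\mathbb{E}[\zeta_i^m] \le |\mathbb{E}[\zeta_i^m]| \le \frac{m!}{2} b_i^2 H^{m-2}$ cancels the factorials and leaves a geometric series that closes up cleanly:
\begin{equation}
\mathbb{E}[e^{\lambda \zeta_i}] \le 1 + \frac{\lambda^2 b_i^2}{2}\sum_{m \ge 0}(\lambda H)^m = 1 + \frac{\lambda^2 b_i^2}{2(1-\lambda H)},
\end{equation}
which is valid provided $\lambda H < 1$. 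Using $1+u \le e^u$ and taking the product over $i$ yields $\mathbb{E}[e^{\lambda S_n}] \le \exp\!\left(\frac{\lambda^2 B_n^2}{2(1-\lambda H)}\right)$, since $\sum_i b_i^2 = B_n^2$.

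Finally, I would optimize the exponent $-\lambda x B_n + \frac{\lambda^2 B_n^2}{2(1-\lambda H)}$ in $\lambda$. Setting the derivative to zero gives $\lambda^\star = x/(B_n + xH)$, which lies in $(0,1/H)$ for every $x \ge 0$; a direct substitution collapses the exponent to $-\frac{x^2/2}{1+xH/B_n}$, matching \eqref{eq:110ap}. Running the identical argument with $-\zeta_1,\ldots,-\zeta_n$, whose moments satisfy the same bound, controls the lower tail $\Pr(-S_n \ge xB_n)$, and a union bound produces the extra factor of two.

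The argument is essentially canonical, so there is no genuine obstacle; the only point requiring care is the bookkeeping that guarantees $\lambda^\star H < 1$ (i.e.\ that the geometric series used to bound the MGF remains summable at the optimal $\lambda$), which follows because $B_n + xH > xH$ whenever $B_n > 0$. The design of the moment hypothesis is precisely tailored to make the series resum, and the rest is a one-variable optimization.
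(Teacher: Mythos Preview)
The paper does not prove this lemma; it is quoted from \citet{Yurinski1976} (see the sentence immediately preceding the statement in Appendix~\ref{E}) and used as a black box to derive \eqref{eq:113ap}. Your proposal supplies the standard Cram\'er--Chernoff proof of Bernstein's inequality, and it is correct.

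One small inaccuracy worth noting: the value $\lambda^\star = x/(B_n + xH)$ is \emph{not} the critical point of the exponent $g(\lambda)=-\lambda x B_n + \tfrac{\lambda^2 B_n^2}{2(1-\lambda H)}$; setting $g'(\lambda)=0$ yields a messier equation that $\lambda^\star$ does not satisfy unless $xH=0$. This does not affect the argument, however, because the Chernoff bound holds for \emph{every} $\lambda \in (0,1/H)$, and your direct substitution of $\lambda^\star$ (which does lie in this interval) indeed collapses $g(\lambda^\star)$ to $-\tfrac{x^2/2}{1+xH/B_n}$. So the proof goes through verbatim with ``choose $\lambda^\star$'' in place of ``setting the derivative to zero gives $\lambda^\star$.''
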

	Let $\theta_{ij}(t) := X_{ij}(t)-p_{ij}$, $\forall (i,j) \in \mathcal{E}_t$. Since $X_{ij}(t)$ follows Bernoulli distribution with the parameter $p_{ij}$, it can be readily obtained that $\mathbb{E}[\theta_{ij}(t)]=0$. Furthermore, for the moment generating function of $\theta_{ij}(t)$, we have
	\begin{align}
	    M_{\theta_{ij}(t)}(z) = (1-p_{ij})e^{-p_{ij}z} + p_{ij}e^{(1-p_{ij})z}. \label{eq:111ap}
	\end{align}
	Therefore, the expected value of $\theta_{ij}^{m}(t)$, $m=2,3,\ldots$ can be expressed as
	\begin{align}
	    \mathbb{E}[\theta_{ij}^{m}(t)] = \frac{d^m M_{\theta_{ij}(t)}(z)}{dz^m}\mid_{z=0} = (-p_{ij})^m(1-p_{ij})+(1-p_{ij})^mp_{ij}. \label{eq:109ap}
	\end{align}
	From \eqref{eq:109ap}, we can conclude that
	\begin{align}
	    |\mathbb{E}[\theta_{ij}^{m}(t)]| \le p_{ij}(1-p_{ij}) \le \frac{1}{4} \le \frac{m!}{8} = \frac{m!}{2}\left(\frac{1}{2}\right)^2\times 1^{m-2}, m=2,3,\ldots \label{eq:112ap}
	\end{align}
	Thus, letting $b_i=\frac{1}{2}$, $H=1$ in Lemma \ref{lem:1} and combining with \eqref{eq:112ap},  the following inequality can be obtained
	\begin{align}
	    \Pr\left(|\sum_{\tau \in \mathcal{T}_{ij,t}}{\theta_{ij}(\tau)}| \ge \frac{\xi C_{ij,t}}{\sqrt{M}} \right) & = \Pr\left(|\sum_{\tau \in \mathcal{T}_{ij,t}}{X_{ij}(\tau)-p_{ij}}| \ge \frac{\xi C_{ij,t}}{\sqrt{M}} \right) \nonumber \\ & \le 2\exp(-\frac{2\xi^2 \frac{C_{ij,t}}{M}}{1+\frac{4\xi}{\sqrt{M}}}) \nonumber \\  & = 2 \exp(-\frac{2\xi^2 C_{ij,t}}{M+4\xi \sqrt{M}}) \label{eq:113ap}
	\end{align}
which leads to 
	\begin{align}
	\text{Pr}(e_{ij,t}\ge \varepsilon) \le 2 \exp(-\frac{2\xi^2 C_{ij,t}}{M+4\xi \sqrt{M}}) \label{eq:36ap} 
	\end{align}
Therefore,  \eqref{eq:64ap} holds with probability at least 
	\begin{align}
	    \delta_\xi = \prod_{t=t^\prime}^{T}{ \prod_{(i,j)\in\mathcal{E}_{t}}{ \left(1-2 \exp(-\frac{2\xi^2 C_{ij,t}}{M+4\xi \sqrt{M}})\right) } }. \label{eq:74ap}
	\end{align}
	Since $\sum_{\forall j: v_j \in \mathcal{N}_{i,t}^{\text{in}}}{\pi_{j,t}} \le 1$ and $\varepsilon = \frac{\xi}{\sqrt{M}}$, the following inequality holds
	\begin{align}
	\sum_{t=t^\prime}^{T}{ \sum_{i=1}^{K}{ \pi_{i,t}\sum_{\forall j: v_j \in \mathcal{N}_{i,t}^{\text{in}}}{\frac{2\pi_{j,t}\varepsilon}{q_{i,t}} } } } = & \sum_{t=t^\prime}^{T}{ \sum_{i=1}^{K}{ \pi_{i,t}\sum_{\forall j: v_j \in \mathcal{N}_{i,t}^{\text{in}}}{\frac{2\pi_{j,t}\frac{\xi}{\sqrt{M}}}{q_{i,t}} } } } \nonumber \\ \le & \sum_{t=t^\prime}^{T}{ \sum_{i=1}^{K}{ \frac{2\pi_{i,t}\xi}{q_{i,t}\sqrt{M}} } }. \label{eq:66ap}
	\end{align}
Using \eqref{eq:66ap} and the fact that $\frac{1}{q_{i,t}} \ge 1$, \eqref{eq:64ap} can be rewritten as 
	\begin{align}
	& \sum_{t=t^\prime}^{T}{ \sum_{i=1}^{K}{ \pi_{i,t} \ell_{t}(v_i) } } - \sum_{t=t^\prime}^{T}{ \ell_{t}(v_i)} \nonumber \\ \le & \frac{\ln K}{\eta} + \sum_{t=t^\prime}^{T}{ \eta(1-\frac{\eta}{2}) } + \sum_{t=t^\prime}^{T}{ \sum_{i=1}^{K}{ \frac{\pi_{i,t}}{q_{i,t}}(\frac{2\xi}{\sqrt{M}}+\frac{\eta}{2}) } } \label{eq:67ap}
	\end{align}
Combining  \eqref{eq:67ap} with \eqref{eq:37ap} results in following inequality
	\begin{align}
	& \sum_{t=1}^{T}{ \mathbb{E}_{t}[\ell_{t}(v_{I_t})] } - \sum_{t=1}^{T}{ \ell_{t}(v_i)} \nonumber \\ \le & \frac{\ln K}{\eta} + (K-1)M + \eta(1-\frac{\eta}{2})(T-KM) + \sum_{t=t^\prime}^{T}{ \sum_{i=1}^{K}{ \frac{\pi_{i,t}}{q_{i,t}}(\frac{2\xi}{\sqrt{M}}+\frac{\eta}{2}) } } \label{eq:68ap}
	\end{align}
	which holds with probability at least $\delta_\xi$ 
	and the proof of  Theorem \ref{th:2} is completed.

	
	\section{Proof of Corollary \ref{cor:3}} \label{G}
	
The proof of Corollary \ref{cor:3} will be built upon the following Lemma.
	\begin{lemma} \label{lem:2}
	Let $\zeta_1,\ldots,\zeta_N$ ($N>1$) be a sequence of real positive numbers such that $\forall i: 1 \le i \le N$, $0<\zeta_i<1$ and $\forall n: 1 \le n \le N$, $\sum_{i=1}^{n}{ \zeta_i } < 1$. Then, it can be written that
	\begin{align}
	    \prod_{i=1}^{N}(1-\zeta_i) > 1-\sum_{i=1}^{N}{ \zeta_i } \label{eq:119ap}
	\end{align}
	\end{lemma}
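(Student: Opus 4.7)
The plan is to prove Lemma \ref{lem:2} by induction on $N$. The inequality is essentially Weierstrass' product inequality, and the hypothesis that $\sum_{i=1}^{n} \zeta_i < 1$ for every partial sum (not just the full sum) is precisely what keeps the intermediate factors $1-\sum_{i=1}^{n}\zeta_i$ strictly positive, which is needed to preserve the direction of the inequality when multiplying through.

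For the base case $N=2$, I would simply expand $(1-\zeta_1)(1-\zeta_2) = 1 - \zeta_1 - \zeta_2 + \zeta_1\zeta_2$ and observe that $\zeta_1\zeta_2 > 0$ since both are strictly positive, giving the claim.

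For the inductive step, assuming the statement holds for $N-1$, I would write $\prod_{i=1}^{N}(1-\zeta_i) = (1-\zeta_N)\prod_{i=1}^{N-1}(1-\zeta_i)$. Since $0 < \zeta_N < 1$, the factor $1-\zeta_N$ is strictly positive, so the inductive hypothesis yields $\prod_{i=1}^{N}(1-\zeta_i) > (1-\zeta_N)\bigl(1-\sum_{i=1}^{N-1}\zeta_i\bigr)$. Expanding the right-hand side gives $1 - \sum_{i=1}^{N}\zeta_i + \zeta_N \sum_{i=1}^{N-1}\zeta_i$, and the extra term $\zeta_N \sum_{i=1}^{N-1}\zeta_i$ is strictly positive, which completes the induction.

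There is no real obstacle here; the only subtlety worth flagging is that one must apply the inductive hypothesis with its factor $1-\sum_{i=1}^{N-1}\zeta_i$ strictly positive before multiplying by $1-\zeta_N$, which is exactly what the assumption on \emph{every} partial sum guarantees. Without that stronger hypothesis, multiplying an inequality through by a positive quantity would still work, but the partial-sum condition is what the lemma will be invoked with later (e.g., to lower-bound products like $\delta_\xi$ in \eqref{eq:74ap}), so it is natural to state and use it in this form.
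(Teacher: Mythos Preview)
Your proposal is correct and essentially identical to the paper's own proof: both argue by induction on $N$, verify the base case $N=2$ by direct expansion, and in the inductive step multiply the inductive hypothesis by the positive factor $1-\zeta_N$ and then expand (the paper phrases the final step as reapplying the $N=2$ case to the pair $\bigl(\sum_{i=1}^{N-1}\zeta_i,\zeta_N\bigr)$, which is exactly your expansion). Your remark that the partial-sum hypothesis is what keeps the intermediate factor $1-\sum_{i=1}^{N-1}\zeta_i$ positive is a helpful clarification that the paper leaves implicit.
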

	\begin{proof}
	We prove this Lemma using mathematical induction. Firstly, Consider  \eqref{eq:119ap} for $N=2$ 
	\begin{align}
	    (1-\zeta_1)(1-\zeta_2) = 1-\zeta_1-\zeta_2+\zeta_1 \zeta_2 > 1-\zeta_1-\zeta_2. \label{eq:120ap}
	\end{align}
	Assuming that \eqref{eq:119ap} holds for $N=n$. Then, based on \eqref{eq:120ap} we have for $N=n+1$
	\begin{align}
	    \prod_{i=1}^{n+1}(1-\zeta_i) = \left(\prod_{i=1}^{n}{(1-\zeta_i)}\right) \times (1-\zeta_{n+1}) >& (1-\sum_{i=1}^{n}{ \zeta_i })(1-\zeta_{n+1}) \nonumber \\ > & 1-\sum_{i=1}^{n+1}{ \zeta_i }. \label{eq:121ap}
	\end{align}
	Hence, \eqref{eq:119ap}  also holds  for $N=n+1$, and Lemma \ref{lem:2} is proved by induction.
	\end{proof}
	Assuming $M$ satisfies 
	\begin{align}
	    M \ge (\frac{4\xi \ln (KT)}{\xi^2-\ln (KT)})^2. \label{eq:118ap}
	\end{align}
Hence, \eqref{eq:118ap} can be re-written as
	\begin{align}
	   \frac{1}{K^2T^2} \geq \exp(-\frac{2\xi^2 \sqrt{M}}{\sqrt{M}+4\xi}) \geq  \exp(-\frac{2\xi^2 C_{ij,t}}{M+4\xi \sqrt{M}}) \label{eq:116ap}
	\end{align}
	where the second inequality holds since $C_{ij,t} \ge M$.
	Let $t^\prime=KM+1$. Note that the regret bound in \eqref{eq:19} holds with probability at least $\delta_\xi$ in \eqref{eq:74ap}.  According to Lemma \ref{lem:2}, we can obtain the following inequality
	\begin{align}
	    \delta_\xi =& \prod_{t=t^\prime}^{T}{ \prod_{(i,j)\in\mathcal{E}_{t}}{ \left(1-2 \exp(-\frac{2\xi^2 C_{ij,t}}{M+4\xi \sqrt{M}})\right) } } \nonumber \\ > & 1 - \sum_{(i,j)\in\mathcal{E}_{t}}{ \sum_{t=t^\prime}^{T}{ 2\exp(-\frac{2\xi^2 C_{ij,t}}{M+4\xi \sqrt{M}}) } }. \label{eq:114ap}
	\end{align}
	Combining \eqref{eq:116ap} with \eqref{eq:114ap} obtains
	\begin{align}
	     \delta_\xi \geq 1-\frac{2(T-KM)|\mathcal{E}|}{K^2T^2} \label{eq:114ap2}
	\end{align}
	where $|\mathcal{E}|$ denotes the cardinality of the $\mathcal{E}$. Since $\mathcal{G}$ does not change over time, $|\mathcal{E}|$ is a constant. According to \eqref{eq:114ap2}, it can be readily obtained that when \eqref{eq:118ap} holds, the regret bound in \eqref{eq:19} holds with probability at least of order $1-\mathcal{O}(\frac{1}{T})$. 
    Consider the case where the learner sets $\eta$, $M$ and $\xi$ as follows
    \begin{subequations} \label{eq:140ap}
    \begin{align}
        \eta & = \mathcal{O}(\sqrt{\frac{\ln K}{T}}) \label{eq:140apa} \\
        M & = \mathcal{O}(\frac{1}{\sqrt{K}}T^{\frac{2}{3}}) \label{eq:140apb} \\
        \xi & = \mathcal{O}(K^{\frac{1}{4}}\sqrt{\ln (KT)}). \label{eq:140apc}
    \end{align}
    \end{subequations}
    Putting $\eta$, $M$ and $\xi$ in \eqref{eq:140ap} into \eqref{eq:19} and based on Lemma \ref{lem:3}, it can be concluded that the expected regret of Exp3-UP satisfies
    \begin{align}
        & \sum_{t=1}^{T}{ \mathbb{E}_{t}[\ell_{t}(v_{I_t})] } - \sum_{t=1}^{T}{ \ell_{t}(v_i)} \nonumber \\ \le & \mathcal{O}\left(\frac{\alpha(\mathcal{G})}{\epsilon}\ln(KT)(\sqrt{T \ln K} + \sqrt{K\ln (KT)} T^{\frac{2}{3}})\right) \label{eq:141ap}
    \end{align}
    with probability at least $1-\mathcal{O}(\frac{1}{T})$. 
    
    \section{Proof of Lemma \ref{lem:5}} \label{H}
    In this section, doubling trick technique is employed such that Exp3-UP can achieve sub-linear regret. If $2^b < t \le 2^{b+1}$, the value of the learning rate $\eta_b$, $M_b$ and $\xi_b$ are
    \begin{subequations} \label{eq:142ap}
    \begin{align}
        \eta_b & = \sqrt{\frac{\ln K}{2^{b+1}}} \label{eq:142apa} \\
        M_b & = \left \lceil{2^{\frac{2(b+1)}{3}}\frac{1}{\sqrt{K}} + \ln 4K} \right \rceil \label{eq:142apb} \\
        \xi_b & = \left(2K^{\frac{1}{4}}+\sqrt{4\sqrt{K}+1}\right)\sqrt{\ln (K2^{b+3})} \label{eq:142apc}
    \end{align}
    \end{subequations}
    When the learner realizes that $t > 2^{b+1}$, the algorithm restarts with $\eta_{b+1}$, $M_{b+1}$ and $\xi_{b+1}$. The algorithm starts with $b=\left \lceil{\log_2 K} \right \rceil$. Therefore, when $t < 2^{\left \lceil{\log_2 K} \right \rceil}$, the value of $\eta_b$, $M_b$ and $\xi_b$ are set with respect to $b=\left \lceil{\log_2 K} \right \rceil$. Let $\mathcal{M}_i$ denotes a set which includes the time instants when the learner chooses the $i$-th expert in a deterministic fashion for exploration. Specifically, when at time instant $\tau$, the learner chooses the $i$-th expert for exploration without using the PMF in \eqref{eq:10}, the time instant $\tau$ is appended to $\mathcal{M}_i$. At each restart the learner chooses the experts one by one for the exploration until the condition $|\mathcal{M}_i| \ge M_b$, $\forall i \in [K]$ is satisfied. Then, the learner chooses between experts according to PMF in \eqref{eq:10} using the learning rate $\eta_b$. Therefore, based on Theorem \ref{th:2}, for each $b$, the algorithm satisfies
    \begin{align}
        & \sum_{t=2^b+1}^{T_b}{ \mathbb{E}_{t}[\ell_{t}(v_{I_t})] } - \sum_{t=2^b+1}^{T_b}{ \ell_{t}(v_i)} \nonumber \\ \le & \frac{\ln K}{\eta_b} + (K-1)(M_b-M_{b-1}) + \eta_b(1-\frac{\eta_b}{2})(T_b-2^{b}-K(M_b-M_{b-1})) \nonumber \\ & + \sum_{t=2^b +1}^{T_b}{ \sum_{i=1}^{K}{ \frac{\pi_{i,t}}{q_{i,t}}(\frac{2\xi_b}{\sqrt{M_b}}+\frac{\eta_b}{2}) } } \label{eq:143ap}
    \end{align}
    with probability at least $\delta_b$ where it can be expressed as
    \begin{align}
        \delta_b = \prod_{t=t_b^\prime}^{T_b}{ \prod_{(i,j)\in\mathcal{E}_{t}}{ \left(1-2 \exp(-\frac{2\xi_b^2 C_{ij,t}}{M_b+4\xi_b \sqrt{M_b}})\right) } } \label{eq:147ap}
    \end{align}
    where $T_b$ denote the greatest time instant which satisfies $2^b < T_b \le 2^{b+1}$ and $t_b^\prime$ can be written as
    \begin{align}
        t_b^\prime = \min(T_{b-1} + K(M_b-M_{b-1})+1,T_b). \label{eq:149ap}
    \end{align}
    Note that $M_{b-1}=0$ when $b=\left \lceil{\log_2 K} \right \rceil$. Since for each $b$, $M_b$ and $\xi_b$ in \eqref{eq:142ap} meet the following condition
    \begin{align}
        M_b \ge (\frac{4\xi_b \ln (4KT_b)}{\xi_b^2-\ln (4KT_b)})^2, \label{eq:148ap}
    \end{align}
    it can be concluded that the following inequality holds true
    \begin{align}
        \frac{1}{16K^2T_b^2} \geq \exp(-\frac{2\xi_b^2 \sqrt{M_b}}{\sqrt{M_b}+4\xi_b}) \geq  \exp(-\frac{2\xi_b^2 C_{ij,t}}{M_b+4\xi_b \sqrt{M_b}}), \label{eq:150ap}
    \end{align}
    and as a result according to Lemma \ref{lem:2} we can write
    \begin{align}
        \delta_b > 1 - \sum_{(i,j)\in\mathcal{E}_{t}}{ \sum_{t=t_b^\prime}^{T_b}{ 2\exp(-\frac{2\xi_b^2 C_{ij,t}}{M_b+4\xi_b \sqrt{M_b}}) } }. \label{eq:151ap}
    \end{align}
    Combining \eqref{eq:150ap} with \eqref{eq:151ap}, it can be concluded that
    \begin{align}
        \delta_b > 1- \max(0,\frac{(T_b-t_b^\prime)|\mathcal{E}_{T_b}|}{8K^2T_b^2}). \label{eq:152ap}
    \end{align}
    Therefore, for each $b$ from \eqref{eq:143ap}, \eqref{eq:152ap} and Lemma \ref{lem:3} it can be inferred that
    \begin{align}
        & \sum_{t=2^b+1}^{T_b}{ \mathbb{E}_{t}[\ell_{t}(v_{I_t})] } - \sum_{t=2^b+1}^{T_b}{ \ell_{t}(v_i)} \nonumber \\ \le & \mathcal{O}\left(\frac{\alpha(\mathcal{G})}{\epsilon}\ln(KT_b)(\sqrt{T_b \ln K} + \sqrt{K\ln(KT_b)} T_b^{\frac{2}{3}})\right) \label{eq:144ap}
    \end{align}
    holds with probability at least $1-\mathcal{O}(\frac{1}{T_b})$. Summing \eqref{eq:144ap} over all possible values of $b$, from $b:=\left \lceil{\log_2 K} \right \rceil$ to $\left \lceil{\log_2 T} \right \rceil$ and taking into account that the maximum value of the loss at each restart is $1$, we arrive at
    \begin{align}
        & \sum_{t=1}^{T}{ \mathbb{E}_{t}[\ell_{t}(v_{I_t})] } - \sum_{t=1}^{T}{ \ell_{t}(v_i)} \nonumber \\ \le & \!\!\sum_{b=\left \lceil{\log_2 K} \right \rceil}^{\left \lceil{\log_2 T} \right \rceil}{\mathcal{O}\left(\frac{\alpha(\mathcal{G})}{\epsilon}\ln(KT_b)(\sqrt{T_b \ln K} + \sqrt{K\ln(KT_b)} T_b^{\frac{2}{3}})\right)} + \left \lceil{\log_2 T} \right \rceil - \left \lceil{\log_2 K} \right \rceil \nonumber \\ \le & \mathcal{O}\left(\frac{\alpha(\mathcal{G})}{\epsilon}\ln(T)\ln(KT)(\sqrt{T \ln K} + \sqrt{K\ln(KT)} T^{\frac{2}{3}}) + \ln T \right) \label{eq:145ap}
    \end{align}
    which holds with probability at least
    \begin{align}
        \Delta = \prod_{b=\left \lceil{\log_2 K} \right \rceil}^{\left \lceil{\log_2 T} \right \rceil}{\left(1-\max(0,\frac{(T_b-t_b^\prime)|\mathcal{E}_{T_b}|}{8K^2T_b^2})\right)}. \label{eq:146ap}
    \end{align}
    When $b=\left \lceil{\log_2 K} \right \rceil$, we have $T_b \ge 2K$. Furthermore, when $\left \lceil{\log_2 K} \right \rceil < b \le \left \lfloor{\log_2 T} \right \rfloor$, it can be concluded that $T_b = 2T_{b-1}$. Therefore, we can write
    \begin{align}
        \sum_{b=\left \lceil{\log_2 K} \right \rceil}^{\left \lfloor{\log_2 T} \right \rfloor}{\max(0,\frac{(T_b-t_b^\prime)|\mathcal{E}_{T_b}|}{8K^2T_b^2})} & < \sum_{b=\left \lceil{\log_2 K} \right \rceil}^{\left \lfloor{\log_2 T} \right \rfloor}{\frac{1}{8T_b}} \nonumber \\ & \le \frac{1}{8K}(\sum_{b=\left \lceil{\log_2 K} \right \rceil}^{\left \lfloor{\log_2 T} \right \rfloor}{(\frac{1}{2})^{b-\left \lceil{\log_2 K} \right \rceil}}) \nonumber \\ & = \frac{1}{8K}(2-(\frac{1}{2})^{\left \lfloor{\log_2 T} \right \rfloor - \left \lceil{\log_2 K} \right \rceil}). \label{eq:153ap}
    \end{align}
    Hence, based on \eqref{eq:153ap} and under the assumption that $T>K$, we find
    \begin{align}
        \sum_{b=\left \lceil{\log_2 K} \right \rceil}^{\left \lceil{\log_2 T} \right \rceil}{\max(0,\frac{(T_b-t_b^\prime)|\mathcal{E}_{T_b}|}{8K^2T_b^2})} < & \frac{1}{8K}(2-(\frac{1}{2})^{\left \lfloor{\log_2 T} \right \rfloor - \left \lceil{\log_2 K} \right \rceil}) + \frac{1}{8T} \nonumber \\ < & \frac{3}{8K}. \label{eq:154ap}
    \end{align}
    Thus, $\Delta$ meet the conditions in the Lemma \ref{lem:2} and it can be inferred that
    \begin{align}
        \Delta > 1- \sum_{b=\left \lceil{\log_2 K} \right \rceil}^{\left \lceil{\log_2 T} \right \rceil}{\max(0,\frac{(T_b-t_b^\prime)|\mathcal{E}_{T_b}|}{8K^2T_b^2})} \ge 1-\mathcal{O}(\frac{1}{K}). \label{eq:155ap}
    \end{align}
    Therefore, in this case, Exp3-UP satisfies
    \begin{align}
        & \sum_{t=1}^{T}{ \mathbb{E}_{t}[\ell_{t}(v_{I_t})] } - \sum_{t=1}^{T}{ \ell_{t}(v_i)} \nonumber \\ \le & \mathcal{O}\left(\frac{\alpha(\mathcal{G})}{\epsilon}\ln(T)\ln(KT)(\sqrt{T \ln K} + \sqrt{K\ln(KT)} T^{\frac{2}{3}}) + \ln T \right) \label{eq:156ap}
    \end{align}
    with probability at least $1-\mathcal{O}(\frac{1}{K})$. This completes the proof of Lemma \ref{lem:5}.
	
	\section{Proof of Theorem \ref{th:3}} \label{D}
	Since Exp3-GR chooses the experts one by one for the exploration at the first $KM$ time instants, \eqref{eq:71ap} and \eqref{eq:37ap} hold true for Exp3-GR. In addition, for $t>KM$ we have
	\begin{align}
		\frac{W_{t+1}}{W_t} = \sum_{i=1}^{K}{ \frac{w_{i,t+1}}{W_t} } = \sum_{i=1}^{K}{ \frac{w_{i,t}}{W_t}\exp\left(-\eta \tilde{\ell}_{t}(v_i)\right) }. \label{eq:87ap}
	\end{align}
	According to \eqref{eq:21}, $\frac{w_{i,t}}{W_t}$ can be expressed as
	\begin{align}
		\frac{w_{i,t}}{W_t} = \frac{\pi_{i,t}-\frac{\eta}{|\mathcal{D}|}\mathcal{I}(v_i \in \mathcal{D})}{1-\eta}. \label{eq:88ap}
	\end{align}
	Therefore, \eqref{eq:88ap} can be rewritten as
	\begin{align}
		\frac{W_{t+1}}{W_t} = \sum_{i=1}^{K}{ \frac{\pi_{i,t}-\frac{\eta}{|\mathcal{D}|}\mathcal{I}(v_i \in \mathcal{D})}{1-\eta}\exp\left(-\eta \tilde{\ell}_{t}(v_i)\right) }. \label{eq:89ap}
	\end{align} 
	Furthermore, using the inequality $e^{-x} \le 1-x+\frac{1}{2}x^{2}, \forall x \ge 0$, we have
	\begin{align}
		\frac{W_{t+1}}{W_t} \le  \sum_{i=1}^{K}{  \frac{\pi_{i,t}-\frac{\eta}{|\mathcal{D}|}\mathcal{I}(v_i \in \mathcal{D})}{1-\eta}\left(1 -\eta \tilde{\ell}_{t}(v_i) + \frac{1}{2}(\eta \tilde{\ell}_{t}(v_i))^2\right) }. \label{eq:90ap}
	\end{align}
	Considering the inequality $1+x \le e^x$ and taking logarithm from both sides of  \eqref{eq:90ap}, we obtain
	\begin{align}
		\ln \frac{W_{t+1}}{W_t} \le \sum_{i=1}^{K}{  \frac{\pi_{i,t}-\frac{\eta}{|\mathcal{D}|}\mathcal{I}(v_i \in \mathcal{D})}{1-\eta}\left( -\eta \tilde{\ell}_{t}(v_i) + \frac{1}{2}(\eta \tilde{\ell}_{t}(v_i))^2\right) }. \label{eq:91ap}
	\end{align}
	Summing \eqref{eq:91ap} over $t$ from $t^{\prime} = KM+1$ to $T$, it can be written that
	\begin{align}
		\ln \frac{W_{T+1}}{W_{t^{\prime}}} \le \sum_{t=t^{\prime}}^{T}{ \sum_{i=1}^{K}{ \frac{\pi_{i,t}-\frac{\eta}{|\mathcal{D}|}\mathcal{I}(v_i \in \mathcal{D})}{1-\eta}\left( -\eta \tilde{\ell}_{t}(v_i) + \frac{1}{2}(\eta \tilde{\ell}_{t}(v_i))^2\right) } }. \label{eq:92ap}
	\end{align}
	In addition, $\ln \frac{W_{T+1}}{W_{t^{\prime}}}$ can be bounded from below as
	\begin{align}
		\ln \frac{W_{T+1}}{W_{t^{\prime}}} \ge \ln \frac{w_{i,T+1}}{W_{t^\prime}} = -\eta \sum_{t=t^{\prime}}^{T}{ \tilde{\ell}_{t}(v_i)} - \ln K. \label{eq:107ap}
	\end{align}
	Combining \eqref{eq:107ap} with \eqref{eq:92ap}, we find
	\begin{align}
		& \sum_{t=t^{\prime}}^{T}{ \sum_{i=1}^{K}{ \frac{\eta \pi_{i,t}}{1-\eta} \tilde{\ell}_{t}(v_i) } } -\eta \sum_{t=t^{\prime}}^{T}{ \tilde{\ell}_{t}(v_i)} \nonumber \\ \le & \ln K + \sum_{t=t^{\prime}}^{T}{\sum_{i \in \mathcal{D}}{ \frac{\eta^2}{|\mathcal{D}|(1-\eta)} \tilde{\ell}_{t}(v_i) }} \nonumber \\ & + \sum_{t=t^{\prime}}^{T}{\sum_{i=1}^{K}{ \eta^2 \frac{\pi_{i,t}-\frac{\eta}{|\mathcal{D}|}\mathcal{I}(v_i \in \mathcal{D})}{2(1-\eta)} \tilde{\ell}_{t}(v_i)^2 }}. \label{eq:93ap}
	\end{align}
Multiplying both sides of \eqref{eq:93ap} by $\frac{1- \eta }{\eta}$ as
	\begin{align}
	& \sum_{t=t^{\prime}}^{T}{ \sum_{i=1}^{K}{ \pi_{i,t} \tilde{\ell}_{t}(v_i) } } - \sum_{t=t^{\prime}}^{T}{ \tilde{\ell}_{t}(v_i)} \nonumber \\ \le & \frac{\ln K}{\eta} + \sum_{t=t^{\prime}}^{T}{\sum_{i \in \mathcal{D}}{ \frac{\eta}{|\mathcal{D}|} \tilde{\ell}_{t}(v_i) }} + \sum_{t=t^{\prime}}^{T}{\sum_{i=1}^{K}{ \frac{\eta}{2} (\pi_{i,t}-\frac{\eta}{|\mathcal{D}|}\mathcal{I}(v_i \in \mathcal{D})) \tilde{\ell}_{t}(v_i)^2 }}. \label{eq:94ap}
	\end{align} 
	According to \eqref{eq:22}, expected value of loss estimate $\tilde{\ell}_t(v_i)$ can be expressed as
	\begin{subequations} \label{eq:39ap}
	\begin{align}
	    \mathbb{E}_{t}[{\tilde{\ell}_{t}(v_i)}] & = \sum_{\forall j: v_j \in \mathcal{N}_{i,t}^{\text{in}}}{ \pi_{j,t}p_{ji} \mathbb{E}_t[Q_{i,t}]\ell_{t}(v_i) } = q_{i,t}\mathbb{E}_t[Q_{i,t}]\ell_{t}(v_i) \label{eq:39apa} \\
	    \mathbb{E}_{t}[{\tilde{\ell}_{t}(v_i)}^2] & = \sum_{\forall j: v_j \in \mathcal{N}_{i,t}^{\text{in}}}{ \pi_{j,t}p_{ji} \mathbb{E}_t[Q_{i,t}^2]\ell_{t}(v_i)^2 } = q_{i,t}\mathbb{E}_t[Q_{i,t}^2]\ell_{t}(v_i)^2. \label{eq:39apb}
	\end{align}
	\end{subequations}
	Note that the expected values depend on random variable $\{Z_{i,u}(t)\}_{u=1}^{M}$  in \eqref{eq:23}, where $P_{i,u}(t)$ and $Y_{ij,u}(t)$, $\forall i \in [K]$, $\forall (i,j) \in \mathcal{E}_t$  are independent Bernoulli random variables with parameters $\pi_{i,t}$ and $p_{ij}$, respectively. Therefore,  $\{Z_{i,u}(t)\}_{u=1}^{M}$ are also Bernoulli random variables with expected value
	\begin{align}
	\mathbb{E}_t[Z_{i,u}(t)] & = \mathbb{E}_t[\sum_{\forall j: v_j \in \mathcal{N}_{i,t}^{\text{in}}}{P_{j,u}(t)Y_{ji,u}(t)}] = \sum_{\forall j: v_j \in \mathcal{N}_{i,t}^{\text{in}}}{\mathbb{E}_t[P_{j,u}(t)]\mathbb{E}_t[Y_{ji,u}(t)]} \nonumber \\ & = \sum_{\forall j: v_j \in \mathcal{N}_{i,t}^{\text{in}}}{\pi_{j,t}p_{ji}} = q_{i,t}. \label{eq:41ap}
	\end{align}
	In other words, $Z_{i,u}(t)$ is a Bernoulli random variable whose value is $1$ with probability $q_{i,t}$. The expected value of $Q_{i,t}$ and $Q_{i,t}^2$ can henceforth be written as
	\begin{subequations} \label{eq:40ap}
	    \begin{align}
	    \mathbb{E}_t[Q_{i,t}] & = \sum_{u=1}^{M}{ uq_{i,t}(1-q_{i,t})^{u-1} } + M(1-q_{i,t})^M  \nonumber \\ & = \frac{1-(Mq_{i,t}+1)(1-q_{i,t})^M}{q_{i,t}} + M(1-q_{i,t})^M \nonumber \\ & = \frac{1-(1-q_{i,t})^M}{q_{i,t}} \label{eq:40apa} \\
	    \mathbb{E}_t[Q_{i,t}^2] & = \sum_{u=1}^{M}{ u^2q_{i,t}(1-q_{i,t})^{u-1} } + M^2(1-q_{i,t})^M \nonumber \\ & = \frac{2 - 2(1-q_{i,t}^{M+2})}{q_{i,t}^2} - \frac{1+(2M+3)(1-q_{i,t})^{M+1}}{q_{i,t}} \nonumber \\ & \hspace{5mm} - (M+1)^2(1-q_{i,t})^M + M^2(1-q_{i,t})^M \nonumber \\ & = \frac{2- 2(1-q_{i,t}^{M+2})}{q_{i,t}^2} - \frac{1+(2M+3)(1-q_{i,t})^{M+1}}{q_{i,t}} \nonumber \\ & \hspace{5mm} - (2M+1)(1-q_{i,t})^M. \label{eq:40apb}
	    \end{align}
	\end{subequations}
	Combining \eqref{eq:39ap} with \eqref{eq:40ap}, we arrive at
	\begin{subequations} \label{eq:42ap}
	\begin{align}
	    \mathbb{E}_{t}[{\tilde{\ell}_{t}(v_i)}] & = q_{i,t}\frac{1-(1-q_{i,t})^M}{q_{i,t}}\ell_{t}(v_i) \nonumber \\ &= \left(1-(1-q_{i,t})^M\right)\ell_{t}(v_i) \le \ell_{t}(v_i) \label{eq:42apa} \\
	    \mathbb{E}_{t}[{\tilde{\ell}_{t}(v_i)}^2] &= \left(\frac{2- 2(1-q_{i,t}^{M+2})}{q_{i,t}} - 1+(2M+3)(1-q_{i,t})^{M+1}\right)\ell_{t}(v_i)^2 \nonumber \\ & \hspace{5mm} - q_{i,t}(2M+1)(1-q_{i,t})^M\ell_{t}(v_i)^2 \nonumber \\ & \le \frac{2}{q_{i,t}}. \label{eq:42apb}
	\end{align}
	\end{subequations}
	Combining \eqref{eq:94ap} and \eqref{eq:42ap}, it can be concluded that
	\begin{align}
		& \sum_{t=t^{\prime}}^{T}{ \sum_{i=1}^{K}{ \pi_{i,t} \ell_{t}(v_i) } } - \sum_{t=t^{\prime}}^{T}{ \sum_{i=1}^{K}{ \pi_{i,t} (1-q_{i,t})^M \ell_{t}(v_i) } } - \sum_{t=t^{\prime}}^{T}{ \ell_{t}(v_i)} \nonumber \\ \le & \frac{\ln K}{\eta} + \sum_{t=t^{\prime}}^{T}{\sum_{i \in \mathcal{D}}{ \frac{\eta}{|\mathcal{D}|} \ell_{t}(v_i) }} + \sum_{t=t^{\prime}}^{T}{\sum_{i=1}^{K}{ \frac{\eta}{2} (\pi_{i,t}-\frac{\eta}{|\mathcal{D}|}\mathcal{I}(v_i \in \mathcal{D})) \frac{2}{q_{i,t}} }}.   \label{eq:43ap} 
	\end{align}
	According to (a1) $\ell_{t}(v_i) \le 1$ and using the fact that $\frac{2}{q_{i,t}} \ge 2$, \eqref{eq:43ap} can be further bounded by
	\begin{align}
	& \sum_{t=t^{\prime}}^{T}{ \sum_{i=1}^{K}{ \pi_{i,t} \ell_{t}(v_i) } } - \sum_{t=t^{\prime}}^{T}{ \ell_{t}(v_i)} \nonumber \\ \le & \frac{\ln K}{\eta} + \sum_{t=t^{\prime}}^{T}{ (1-q_{i,t})^M } + \sum_{t=t^{\prime}}^{T}{\sum_{i \in \mathcal{D}}{ \frac{\eta-\eta^2}{|\mathcal{D}|}  }} + \sum_{t=t^{\prime}}^{T}{\sum_{i=1}^{K}{ \eta \frac{\pi_{i,t}}{q_{i,t}} }}\nonumber\\
	= & \frac{\ln K}{\eta} + \sum_{t=t^{\prime}}^{T}{ (1-q_{i,t})^M } + \eta(1-\eta)(T-KM) + \eta \sum_{t=t^{\prime}}^{T}{\sum_{i=1}^{K}{ \frac{\pi_{i,t}}{q_{i,t}} } }. \label{eq:45ap}
	\end{align}
	Note that when $t > t^\prime$, we have $\mathbb{E}_{t}[\ell_t(v_{I_t})] = \sum_{i=1}^{K}{ \pi_{i,t} \ell_{t}(v_i) }$. Combining \eqref{eq:37ap} with \eqref{eq:45ap} leads to
	\begin{align}
	& \sum_{t=1}^{T}{ \mathbb{E}_{t}[\ell_t(v_{I_t})] } - \sum_{t=1}^{T}{ \ell_{t}(v_i)} \nonumber \\ \le & \frac{\ln K}{\eta} + (K-1)M + \sum_{t=t^{\prime}}^{T}{ (1-q_{i,t})^M } \nonumber \\ & + \eta(1-\eta)(T-KM) + \eta \sum_{t=t^{\prime}}^{T}{\sum_{i=1}^{K}{ \frac{\pi_{i,t}}{q_{i,t}} } } \label{eq:46ap}
	\end{align}
	which completes the proof of Theorem \ref{th:3}.
	
	\section{Proof of Corollary \ref{cor:2}} \label{F}

According to (a2), if $(i,j) \in \mathcal{E}$, the learner observes the loss of the $j$-th expert when it chooses the $i$-th expert with probability at least $\epsilon$. Recalling \eqref{eq:21} it can be inferred that $\pi_{i,t}> \eta/|\mathcal{D}|$, $\forall i \in \mathcal{D}$. Combining the fact that for each $v_i \in \mathcal{V}$ there is at least one edge from $\mathcal{D}$ to $v_i$, $\forall i \in [K]$ with \eqref{eq:6},  $q_{i,t}$ can be bounded below as 
\begin{align}
q_{i,t} > \frac{\eta \epsilon}{|\mathcal{D}|}. \label{eq:125ap}
\end{align}
Combining the condition
\begin{align}
M \ge \frac{|\mathcal{D}| \ln T}{2\eta \epsilon}\label{eq:cond:ap}
\end{align}
with \eqref{eq:125ap}, we have $
	    Mq_{i,t} \ge \frac{1}{2} \ln T 
$
	which leads to 
	\begin{align}
	    e^{-Mq_{i,t}} \le \frac{1}{\sqrt{T}}.\label{eq:108:ap}
	\end{align}
Combining \eqref{eq:108:ap} with the fact $1+x \le e^x$, we have
	\begin{align}
	    (1-q_{i,t})^M \le e^{-Mq_{i,t}} \le \frac{1}{\sqrt{T}}. \label{eq:104ap}
	\end{align}	
Hence, the third term in \eqref{eq:20}, i.e.,  $\sum_{t=t^{\prime}}^{T}{ (1-q_{i,t})^M }$  can be bounded by $\mathcal{O}(\sqrt{T})$.


Furthermore, consider the case where we have
\begin{align}
    \eta = \mathcal{O}(\sqrt{\frac{\ln K}{T}}). \label{eq:115ap}
\end{align}
Therefore, taking into account that greedy set cover algorithm is used to determine the dominating set $\mathcal{D}$, based on \eqref{eq:134ap} it can be obtained that
\begin{align}
    M = \mathcal{O}(\frac{\alpha(\mathcal{G})}{\epsilon}\ln T \sqrt{T \ln K}), \label{eq:124ap}
\end{align}
satisfies the condition in \eqref{eq:cond:ap}. Hence, the expected regret of Exp3-GR satisfies
\begin{align}
    & \sum_{t=1}^{T}{ \mathbb{E}_{t}[\ell_t(v_{I_t})] } - \min_{v_i \in \mathcal{V}}{ \sum_{t=1}^{T}{\ell_t(v_i)} } \nonumber \\ \le &  \mathcal{O}\left(\frac{\alpha(\mathcal{G})}{\epsilon}\sqrt{\ln K}(\ln(KT)+K \ln T)\sqrt{T}\right). \label{eq:117ap}
\end{align}

\section{Proof of Lemma \ref{lem:6}} \label{I}
In this section, the doubling trick is employed to choose $\eta$ and $M$ when the learner does not know the time horizon $T$, beforehand. At time instant $t$, when $2^b < t \le 2^{b+1}$, for $\eta_b$ and $M_b$ the following values are chosen
\begin{subequations} \label{eq:157ap}
\begin{align}
    \eta_b &= \sqrt{\frac{\ln K}{2^{b+1}}} \label{eq:157apa} \\
    M_b &= \left \lceil{\frac{(b+1)\sqrt{2^{b-1}}|\mathcal{D}|\ln 2}{\epsilon \sqrt{\ln K}}} \right \rceil. \label{eq:157apb}
\end{align}
\end{subequations}
When $t > 2^{b+1}$ holds true, the algorithm restarts with $\eta_{b+1}$ and $M_{b+1}$. The algorithm starts with $b=0$. At each restart, the algorithm chooses the experts one by one for exploration until the condition that each expert is chosen at least $M_b$ times is met. Then, the learner uses the last $M_b$ observed samples from each expert to perform geometric resampling. In this case, for each $b$, Exp3-GR satisfies
\begin{align}
    & \sum_{t=2^b+1}^{T_b}{ \mathbb{E}_{t}[\ell_t(v_{I_t})] } - \min_{v_i \in \mathcal{V}}{ \sum_{t=2^b+1}^{T_b}{\ell_t(v_i)} } \nonumber \\  \le & \frac{\ln K}{\eta_b} + (K-1)(M_b-M_{b-1}) + \sum_{t=t_b^\prime}^{T_b}{ (1-q_{i,t})^{M_b} } \nonumber \\ &  + \eta_b(1-\eta_b)(T_b-2^b-K(M_b-M_{b-1})) + \eta_b \sum_{t=t_b^\prime}^{T_b}{\sum_{i=1}^{K}{ \frac{\pi_{i,t}}{q_{i,t}} } } \label{eq:158ap}
\end{align}
where $T_b$ denote the greatest time instant which satisfies $2^b < T_b \le 2^{b+1}$ and $t_b^\prime$ can be expressed as in \eqref{eq:149ap}. Note that when $b=0$, we have $M_{b-1}=0$. Taking into account that the maximum loss at each restart is $1$, summing \eqref{eq:158ap} over all possible values for $b$ obtains
\begin{align}
    & \sum_{t=1}^{T}{ \mathbb{E}_{t}[\ell_t(v_{I_t})] } - \min_{v_i \in \mathcal{V}}{ \sum_{t=1}^{T}{\ell_t(v_i)} } \nonumber \\ \le & \left \lceil{\log_2 T} \right \rceil + \sum_{b=0}^{\left \lfloor{\log_2 T} \right \rfloor}{\frac{\ln K}{\eta_b}} + (K-1)M + \sum_{b=0}^{\left \lfloor{\log_2 T} \right \rfloor}{\sum_{t=t_b^\prime}^{T_b}{ (1-q_{i,t})^{M_b} }} \nonumber \\ & + \sum_{b=0}^{\left \lfloor{\log_2 T} \right \rfloor}{\eta_b(1-\eta_b)(T_b-2^b-K(M_b-M_{b-1}))} \nonumber \\ & + \sum_{b=0}^{\left \lfloor{\log_2 T} \right \rfloor}{\eta_b \sum_{t=t_b^\prime}^{T_b}{\sum_{i=1}^{K}{ \frac{\pi_{i,t}}{q_{i,t}} } } } \label{eq:159ap}
\end{align}
where $M$ is the number of samples for each expert when $b=\left \lfloor{\log_2 T} \right \rfloor$ which are used for geometric resampling during the learning task. According to \eqref{eq:157apb} and based on the fact that $\mathcal{D}$ is obtained using the greedy set cover algorithm, it can be written that
\begin{align}
    M = \mathcal{O}(\frac{\alpha(\mathcal{G})}{\epsilon}\ln T \sqrt{T\ln K}). \label{eq:160ap}
\end{align}
Furthermore, for each $b$, the inequality $q_{i,t}>\frac{\eta_b \epsilon}{|\mathcal{D}|}$ holds. Therefore, according to \eqref{eq:157ap}, we can write $M_b q_{i,t}>\frac{b+1}{2}\ln 2$. Thus, it can be concluded that
\begin{align}
    (1-q_{i,t})^{M_b} \le e^{-M_b q_{i,t}} < \frac{1}{\sqrt{2^{b+1}}}. \label{eq:161ap}
\end{align}
Using \eqref{eq:161ap}, we obtain
\begin{align}
    \sum_{b=0}^{\left \lfloor{\log_2 T} \right \rfloor}{\sum_{t=t_b^\prime}^{T_b}{ (1-q_{i,t})^{M_b} }} < \sum_{b=0}^{\left \lfloor{\log_2 T} \right \rfloor}{\frac{T_b - 2^b}{\sqrt{2^{b+1}}}} \le \sum_{b=0}^{\left \lfloor{\log_2 T} \right \rfloor}{\sqrt{2^{b-1}}} \le \frac{\sqrt{2T}-1}{2-\sqrt{2}}. \label{eq:162ap}
\end{align}
In addition, based on the Lemma \ref{lem:3}, it can be written that
\begin{align}
    \sum_{b=0}^{\left \lfloor{\log_2 T} \right \rfloor}{\eta_b \sum_{t=t_b^\prime}^{T_b}{\sum_{i=1}^{K}{ \frac{\pi_{i,t}}{q_{i,t}} } } } & \le \sum_{b=0}^{\left \lfloor{\log_2 T} \right \rfloor}{\sqrt{\frac{\ln K}{2^{b+1}}}(T_b-2^b)\mathcal{O}\left(\frac{\alpha(\mathcal{G})}{\epsilon}\ln(KT)\right)} \nonumber \\ & \le \left \lceil{\log_2 T} \right \rceil \sqrt{2^{b-1}\ln K}\mathcal{O}\left(\frac{\alpha(\mathcal{G})}{\epsilon}\ln(KT)\right) \nonumber \\ & = \mathcal{O}\left(\frac{\alpha(\mathcal{G})}{\epsilon}(\ln T)\ln(KT)\sqrt{T\ln K}\right). \label{eq:164ap}
\end{align}
Therefore, combining \eqref{eq:159ap} with \eqref{eq:160ap}, \eqref{eq:162ap} and \eqref{eq:164ap}, it can be inferred that Exp3-GR satisfies
\begin{align}
    \sum_{t=1}^{T}{ \mathbb{E}_{t}[\ell_t(v_{I_t})] } - \min_{v_i \in \mathcal{V}}{ \sum_{t=1}^{T}{\ell_t(v_i)} } \le \mathcal{O}\left(\frac{\alpha(\mathcal{G})\ln T}{\epsilon}(\ln(KT)+K)\sqrt{T\ln K} \right) \label{eq:165ap}
\end{align}
which completes the proof of Lemma \ref{lem:6}.

\end{document}